\documentclass{article}

\usepackage[preprint]{arxiv}

\usepackage[utf8]{inputenc} 
\usepackage[T1]{fontenc}    
\usepackage{hyperref}       
\usepackage{url}            
\usepackage{booktabs}       
\usepackage{amsfonts}       
\usepackage{nicefrac}       
\usepackage{microtype}      
\usepackage{xcolor}         

\usepackage[linesnumbered,ruled]{algorithm2e}
\usepackage{amsfonts}
\usepackage{amsmath}
\usepackage{amssymb}
\usepackage{amsthm}
\usepackage{amsopn}
\usepackage{authblk}
\usepackage{bm}
\usepackage{bbm}
\usepackage{braket}
\usepackage{caption}
\usepackage{comment}
\usepackage{extarrows}
\usepackage{graphicx}
\usepackage{mathrsfs}  
\usepackage[mathscr]{euscript}
\usepackage{subcaption}
\usepackage{titlesec}
\usepackage[capitalise]{cleveref}
\usepackage{wrapfig}
\usepackage{adjustbox}
\usepackage{enumitem}

\makeatletter
\DeclareSymbolFont{extraup}{U}{zavm}{m}{n}
\DeclareMathSymbol{\newcheckmark}{\mathalpha}{extraup}{128}
\DeclareMathSymbol{\xmark}{\mathalpha}{extraup}{129}
\makeatother

\hypersetup{
    colorlinks,
    citecolor=blue,
    linkcolor=blue,
    urlcolor=blue
}

\definecolor{somecolor}{RGB}{204, 88, 3}

\def\h{\mathbf{h}}

\def\x{\mathbf{x}}

\def\T{\mathbf{T}}
\def\U{\mathbf{U}}

\def\W{\mathbf{W}}

\def\Z{{Z}}
\DeclareMathOperator{\erfc}{erfc}
\newcommand{\tr}{\mathrm{Tr}}

\theoremstyle{plain}
\newtheorem*{theorem*}{Theorem}
\newtheorem{theorem}{Theorem}[section]
\newtheorem{proposition}[theorem]{Proposition}

\newtheorem{lemma}[theorem]{Lemma}
\newtheorem{corollary}[theorem]{Corollary}

\theoremstyle{remark}
\newtheorem{remark}[theorem]{Remark}

\title{Revisiting Glorot Initialization \\ for Long-Range Linear Recurrences}

\author{%
  \textbf{Noga Bar}$^{*\dagger}$
  \hfill
  \textbf{Mariia Seleznova}$^{*\ddagger}$
  \hfill
  \textbf{Yotam Alexander}$^{\dagger}$
  \hfill
  \textbf{Gitta Kutyniok}$^{\ddagger}$
  \hfill
  \textbf{Raja Giryes}$^{\dagger}$
  \\
  $^*$Equal contribution
  \quad
  $^\dagger$Tel Aviv University
  \quad
  $^\ddagger$Ludwig-Maximilians-Universität München
}

\begin{document}

\maketitle

\begin{abstract}
Proper initialization is critical for Recurrent Neural Networks (RNNs), particularly in long-range reasoning tasks, where repeated application of the same weight matrix can cause vanishing or exploding signals.
A common baseline for linear recurrences is Glorot initialization, designed to ensure stable signal propagation—but derived under the infinite-width, fixed-length regime—an unrealistic setting for RNNs processing long sequences. In this work, we show that Glorot initialization is in fact unstable: small positive deviations in the spectral radius are amplified through time and cause the hidden state to explode.
Our theoretical analysis demonstrates that sequences of length $t = O(\sqrt{n})$, where $n$ is the hidden width, are sufficient to induce instability.
To address this, we propose a simple, dimension-aware rescaling of Glorot that shifts the spectral radius slightly below one, preventing rapid signal explosion or decay. These results suggest that standard initialization schemes may break down in the long-sequence regime, motivating a separate line of theory for stable recurrent initialization.
\end{abstract}

\section{Introduction}
Recurrent Neural Networks (RNNs) have long been foundational models for sequential data, powering applications that range from time-series forecasting~\citep{salinas2020deepar,rangapuram2018deep} to natural‐language processing~\citep{mikolov2010recurrent,sundermeyer2015feedforward,sundermeyer2012lstm,gers2001lstm}. Although recent advances in sequence modeling have centered on State-Space Models (SSMs)~\citep{gu2021efficiently,gu2023mamba,de2024griffin}, a prominent concurrent work has demonstrated that a class of RNNs known as Linear Recurrent Units~(LRUs)~\citep{orvieto2023resurrecting} can match the performance of SSMs with simpler architectures. This observation reaffirms the relevance of RNNs in modern deep learning, due to their efficiency in long-range reasoning.

Despite this potential, RNNs remain difficult to train on long sequences due to the well-known problem of vanishing and exploding gradients~\citep{bengio1994learning,pascanu2013difficulty}. The core issue lies in the repeated application of the same weight matrix at every time step: even a slight spectral imbalance is exponentially amplified, causing hidden states—and hence gradients—to either explode or vanish. While deep feedforward networks suffer from similar pathologies, each layer in such architectures uses an independent weight matrix, making the dynamics of signal propagation easier to analyze and control.

In deep feedforward networks, instability is typically mitigated by principled initialization schemes such as Glorot~\citep{glorot2010understanding} or He~\citep{he2015delving} initialization. 
These methods are derived under the infinite-width limit, where the width tends to infinity while the depth is kept fixed. In this regime, one can compute the exact propagation statistics and tune the weight variance so that activations and gradients remain of the same magnitude~\citep{glorot2010understanding}.
Although these initialization schemes have been immensely successful in practice, recent theoretical work reveals that their underlying infinite-width approximation becomes increasingly inaccurate as depth grows~\citep{hanin2019finite,hanin2020products,li2021future,seleznova2022neural}.
This raises questions about the applicability of such schemes to RNNs, which can be viewed as extremely deep networks in time—typically of modest width but unbounded depth.
As in the feedforward case, analyzing RNNs under the joint limit of infinite width and infinite input length poses significant theoretical challenges: standard random matrix theory techniques used in the infinite-width setting no longer apply when both dimensions grow~\citep{tao2012topics}. In RNNs, this limitation is further exacerbated by the repeated application of the same weight matrix at each time step.

Yet, recent approaches to long-range RNNs continue to be guided by infinite-width heuristics. Notably, the LRU method~\citep{orvieto2023resurrecting} is motivated by the Glorot scheme, aiming to construct a diagonal initialization that matches the spectrum of Glorot-initialized matrices. This implicitly assumes that replicating Glorot’s spectral behavior is sufficient to ensure stable recurrent dynamics. In contrast, we show that standard Glorot initialization becomes unstable when iterated over long sequences in linear recurrent networks, thereby challenging this assumption.

Our main contributions are as follows:
\begin{itemize}[leftmargin=*]
    \item \textbf{Instability of Glorot:} Prior work has correctly noted that, in the infinite-width limit, the spectrum of large non-Hermitian Gaussian matrices (real or complex) used in Glorot initialization lies within the unit circle, in accordance with the circular law \citep{bai1997circular,girko1985circular}. However, we highlight a subtle but critical point often overlooked in prior works: the spectral radius of such matrices converges to one from \emph{above} (see \cref{section:gaussian_matrices}). In other words, the largest eigenvalue typically has a modulus slightly greater than one. While this deviation is asymptotically small, it is sufficient to cause exponential growth in the hidden state over long sequences. This makes the standard Glorot initialization inherently unstable in long-sequence recurrent settings.
    
    \item \textbf{Rescaled Glorot initialization:} To address this instability, we propose a simple rescaling of the weight matrix by a dimension-dependent constant.
    This adjustment lowers the probability that the radius exceeds one by shifting it approximately one standard deviation below its original expectation (see \cref{section:rescaling}).
    As a result, the rescaled initialization prevents exponential growth in hidden state norms, even over long sequences. Consistent with prior work, our initialization favors eigenvalues close to one from \textit{below}, as slow signal decay is generally preferable to amplification for memory retention in RNNs~\citep{orvieto2023resurrecting,white2004short,ganguli2008memory}. We therefore argue that this rescaling provides a more principled and robust baseline for long-sequence modeling than the standard Glorot scheme.

    \item \textbf{Signal propagation over long sequences:} We analyze the hidden state norm in linear RNNs with complex Glorot initialization, beginning with a lower bound in the \textit{finite-width, finite-sequence} setting under i.i.d. Gaussian inputs (\cref{section:linear_hidden_states}).
    In the infinite-width-and-length regime, our analysis guarantees exponential growth of the hidden state with time (see \cref{section:infinite_length}).
    Notably, we show that a sequence length of $\Theta(\sqrt{n})$ (where $n$ is the width) suffices to induce explosion.
    In contrast, in the infinite-width but fixed-length regime, the norm of the hidden states grows only slowly with time (see \cref{section:finite_hidden_states}), failing to reflect long-sequence behavior.

    \item \textbf{Numerical experiments:} We validate our theoretical findings empirically, evaluating both the statistical behavior of hidden states at initialization and the downstream performance of linear RNNs on real-world sequential data (see \cref{section:experiments}). As expected, standard Glorot initialization leads to signal explosion on long-range reasoning tasks, while our rescaled initialization remains stable and trainable across multiple tasks. 
\end{itemize}
Overall, our work introduces a simple yet theoretically grounded modification to Glorot initialization, establishing it as a principled baseline for long-range reasoning tasks. In doing so, we underscore the often-overlooked importance of the theoretical setting—particularly assumptions about width, depth, and scaling—that underpin initialization schemes in recurrent architectures.

\section{Related Work}
Existing initialization theory primarily addresses feedforward networks with fixed depth. Extending these insights to recurrent architectures, especially in the long-sequence regime, remains an open problem. In this section, we survey works on signal propagation, stability in RNNs, and recent efforts to understand the double-scaling limit of neural networks.

\paragraph{Initialization and signal propagation in feedforward networks.}
Glorot and He initialization schemes \citep{glorot2010understanding,he2015delving} were originally developed to prevent vanishing and exploding gradients in deep feedforward networks. Their theoretical justification typically relies on the analysis of signal propagation in the infinite-width, fixed-depth regime. This framework was formalized for fully connected networks by \cite{schoenholz2016deep} and \citet{poole2016exponential}, and later extended to other architectures \citep{xiao2018dynamical,tarnowski2019dynamical,gilboa2019dynamical}.
Overall, signal propagation theory has had a significant impact on network design and initialization, contributing both theoretical insights and strong empirical performance.

\paragraph{Initialization and stability in RNNs.}
Extending initialization theory to recurrent nets is challenging due to weight sharing across time and the unbounded effective depth introduced by repetitively applying the recurrent matrix. While signal propagation techniques have been adapted to vanilla RNNs and simple gated units in the infinite-width, fixed-length setting \citep{chen2018dynamical,gilboa2019dynamical,alemohammad2021recurrent}, they do not fully capture long-sequence processing properties.

Recent work has begun to address infinite-length regimes in simplified settings, such as single-neuron or diagonal recurrent systems~\citep{VanishExplode2024}, revealing new instability mechanisms not accounted for by standard initialization theory. However, in the absence of a general framework, RNN stability in practice is often managed through architectural interventions—such as LSTM~\citep{hochreiter1997long} and GRU~\citep{cho2014learning} gating—alongside normalization layers~\citep{ba2016layer,gu2021efficiently}, gradient clipping~\citep{pascanu2013difficulty,zhang2019gradient}, or spectral regularization during training~\citep{jose2018kronecker,zhang2018stabilizing,kanai2017preventing}. Some works propose orthogonal or unitary initializations~\citep{biegun2024rotrnn,henaff2016recurrent,mikolov2014learning,DBLP:journals/corr/LeJH15} to preserve the norm over time and enable stable propagation. Yet, these can be difficult to maintain throughout training and may be suboptimal for memory retention~\citep{white2004short,ganguli2008memory}.
Finally, in more recent SSM architectures, the recurrent unit adopts a constrained structure with a predefined, non-random spectrum to preserve the signal \citep{gu2020hippo,fuhungry,gu2023mamba,gu2021efficiently}.

Recent interest in linear RNNs (in particular, LRUs \citep{orvieto2023resurrecting}) has renewed attention on initialization strategies for long-horizon regimes. Despite their effectiveness, these models are sensitive to initialization, and often adopt Glorot-style variance scaling as a baseline, inherited from feedforward networks. As we discussed, this practice lacks theoretical justification in recurrent contexts, especially under long sequences.
A systematic understanding of initialization in linear recurrent models under such conditions remains an open problem.

\paragraph{Double scaling limit and random matrix theory.} 
The theoretical challenge at the heart of signal propagation in long-range RNNs is the \textit{double scaling limit}, where both the network's width $n$ and the sequence length (or depth) $t$ become large. This regime is natural for RNNs and other deep architectures, but it lies outside the reach of standard random matrix theory tools. Classical methods—such as free probability, spectral theory, or genus expansions—assume independent matrices or fixed depth, and thus fail when the same matrix is applied repeatedly \citep{tao2012topics}.
Although recent work has addressed the double-scaling limit in specific architectures \citep{hanin2020products,hanin2019finite,li2021future,roberts2022principles,razin2024implicit,seleznova2022neural}, to the best of our knowledge, the recurrent setting has not been explored. Our work addresses this gap by characterizing the failure of Glorot-style initialization in linear RNNs under double scaling, and proposing a theoretically grounded correction.

\section{Problem Setup}
Before presenting our contributions, we review the theoretical background relevant to this work: the linear recurrent layer, Glorot initialization, and its spectral behavior in the infinite-width regime. 

\subsection{Linear Recurrent Layer}\label{section:lru}
While classical RNNs rely on non-linear activations, recent work on Linear Recurrent Units (LRUs) has shown that linear recurrences can effectively model long-range dependencies in sequential data~\citep{orvieto2023resurrecting}. In addition to their empirical performance, linear RNNs allow for more tractable theoretical analyses, particularly with respect to the spectral properties of the weight matrix. In this work, we focus on this class of models.

Given an input sequence $(\x_1, \dots, \x_t)$ with $\x_i \in \mathbb{R}^n$, the corresponding hidden states $(\h_1, \dots, \h_t)$ are computed via a time-invariant weight matrix $\W \in \mathbb{R}^{n \times n}$ as:
\begin{equation}\label{eq:lru}
\h_t := \W \h_{t-1} + \x_t = \sum_{k=0}^{t} \W^k \x_{t-k},
\end{equation}
where we set the initial state $\h_0 = 0$ for simplicity. Here, $t \in \mathbb{N}$ denotes the sequence length, which may be large in long-context applications. This formulation corresponds to a vanilla linear RNN without gating.

 As evident from the power series expansion, each input $\x_{t-k}$ is amplified (or attenuated) by $\W^k$, meaning the overall dynamics are highly sensitive to the spectral radius of $\W$. This highlights the susceptibility of linear RNNs to the exploding or vanishing gradients problem~\citep{bengio1994learning, pascanu2013difficulty}, especially as the sequence length increases. The initialization of $\W$ thus plays a critical role in determining the stability of the model over long horizons.

\begin{remark}
A more general formulation of the recurrent layer includes an input projection term: $\h_t = \W \h_{t-1} + \mathbf{B} \x_t$, where $\mathbf{B} \in \mathbb{R}^{n \times n}$ is a non-recurrent input matrix. In this work, we omit $\mathbf{B}$ without loss of generality, as it does not affect our main results. For analytical purposes, we treat $\x_t$ as the effective input, implicitly assuming $\mathbf{B}$ is the identity or absorbed into the distribution of $\x_t$.
\end{remark}

\subsection{Glorot Initialization}
Glorot initialization was originally proposed for feedforward networks to preserve the variance of activations and gradients during forward and backward propagation~\citep{glorot2010understanding}. It prescribes initializing dense weight matrices with i.i.d.\ entries drawn from a normal distribution $\mathcal{N}(0, \sigma^2)$, where $\sigma = \sqrt{2 / (n_{\text{in}} + n_{\text{out}})}$, and $n_{\text{in}}, n_{\text{out}}$ denote the input and output widths of the layer. When applied to a square recurrent matrix $\W \in \mathbb{R}^{n \times n}$, this yields:
\begin{equation}\label{eq:glorot_init}
    \W^{\text{glorot}} \sim \mathcal{N}(0, 1/n), \quad \text{ i.i.d.}
\end{equation}
In classical RNNs, the recurrent matrix $\W$ is real and non-Hermitian. However, some recent works—including the LRU architecture~\citep{orvieto2023resurrecting}—have employed complex-valued initialization, primarily because of its analytically convenient spectral properties. Motivated by this, we also consider the \textit{complex Glorot initialization}:
\begin{equation}\label{eq:complex_glorot}
\W^{\text{glorot}} = \dfrac{1}{\sqrt{2}}\Z_1 + \dfrac{i}{\sqrt{2}} \Z_2, \quad \text{where } \Z_1, \Z_2 \sim \mathcal{N}(0, 1/n) \text{ i.i.d.}
\end{equation}
This complex formulation enables using tools from non-Hermitian random matrix theory and yields cleaner spectral statistics than its real-valued counterpart. Below, we review existing random matrix theory results on the eigenvalue distribution of $\W$ under both real and complex Glorot initializations.

\subsection{Spectral Distribution of Large Gaussian Matrices}
There is a substantial body of mathematical literature analyzing the eigenvalue distribution of Gaussian random matrices in the large-$n$ limit~\citep{ginibre1965statistical, bai1997circular, tao2008random, rider2003limit, rider2014extremal}. One of the most well-known results in this area is the \emph{circular law}, first conjectured by \citet{girko1985circular} and later made rigorous by \citet{bai1997circular} and others:

\begin{theorem}[Circular law]
    Assume a matrix $\W\in\mathbb{F}^{n\times n}$ with $\mathbb{F}\in\{\mathbb{R},\mathbb{C}\}$ is sampled from a complex or real Glorot initialization. Then, the empirical distribution of the eigenvalues of $\W$, denoted $\mu_n(\lambda)$, converges almost surely to a uniform distribution on a unit disk in the complex plane:
    \begin{equation}
        \mu_n(\lambda) \xrightarrow[n\to\infty]{a.s.} \mathcal{U}(\{z\in\mathbb{C} : |z| \leq 1\}).
    \end{equation}
\end{theorem}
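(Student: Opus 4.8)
The plan is to prove the statement via \emph{Hermitization} together with the logarithmic-potential method, which turns the ill-conditioned non-Hermitian eigenvalue problem into a controlled family of Hermitian ones. Write $\mu_n = \tfrac1n\sum_{j=1}^n \delta_{\lambda_j(\W)}$ and recall that its logarithmic potential is
\begin{equation}
U_{\mu_n}(z) \;=\; -\int_{\mathbb{C}}\log|z-\lambda|\,d\mu_n(\lambda)
\;=\; -\frac1n\log\bigl|\det(\W-zI)\bigr|
\;=\; -\frac1n\sum_{j=1}^n \log\sigma_j(\W-zI),
\end{equation}
where $\sigma_1\ge\cdots\ge\sigma_n\ge0$ are the singular values of $\W-zI$; equivalently $U_{\mu_n}(z) = -\tfrac12\int_0^\infty \log x\,d\nu_{n,z}(x)$, with $\nu_{n,z}$ the empirical spectral distribution of the Hermitian matrix $(\W-zI)^*(\W-zI)$. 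Since a probability measure on $\mathbb{C}$ is recovered from its logarithmic potential via $\mu_n = -\tfrac1{2\pi}\Delta U_{\mu_n}$ in the sense of distributions, it suffices to show that for Lebesgue-almost every $z$ one has $U_{\mu_n}(z)\to U_\mu(z)$ almost surely (with enough uniform integrability in $z$ to upgrade this to $L^1_{\mathrm{loc}}$ convergence), where $U_\mu$ is the logarithmic potential of $\mathcal{U}(\{|z|\le1\})$, and then pass distributional Laplacians to the limit.

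The analytic core has two ingredients. First, convergence of the singular-value bulk: for each fixed $z$, the matrix $(\W-zI)^*(\W-zI)$ is a shifted sample-covariance (Wishart-type) matrix, so by classical Marchenko--Pastur / Dozier--Silverstein-type results its ESD $\nu_{n,z}$ converges almost surely to a deterministic, compactly supported measure $\nu_z$, characterized through its Stieltjes transform. Second, evaluating $-\tfrac12\int_0^\infty\log x\,d\nu_z(x)$ in closed form (most cleanly by differentiating the Stieltjes transform of the one-parameter family $\nu_z$) yields $U_\mu(z) = \tfrac12(1-|z|^2)$ for $|z|\le1$ and $U_\mu(z) = -\log|z|$ for $|z|>1$; one then checks directly that $-\tfrac1{2\pi}\Delta U_\mu$ equals $\tfrac1\pi\mathbbm{1}_{\{|z|\le1\}}$, i.e.\ the uniform law on the unit disk, which closes the argument once $U_{\mu_n}\to U_\mu$ has been established.

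The main obstacle — and the reason the circular law remained open for decades in the general i.i.d.\ case \citep{tao2008random} — is that $x\mapsto\log x$ is unbounded at both ends, so weak convergence $\nu_{n,z}\to\nu_z$ does \emph{not} by itself yield convergence of $\int\log x\,d\nu_{n,z}$. The upper tail is easy: $\sigma_1(\W-zI)\le\|\W\|_{\mathrm{op}}+|z|=O(1)$ almost surely for Gaussian $\W$. The delicate part is the \emph{lower} tail: one must show that with overwhelming probability $\sigma_{\min}(\W-zI)\ge n^{-C}$ and that not too many singular values are tiny, so that the logarithmic singularity at $0$ contributes negligibly. For the Gaussian ensembles of \cref{eq:glorot_init,eq:complex_glorot} this smallest-singular-value estimate is the most tractable case — it follows from rotational invariance and explicit densities — whereas the general case needs the Rudelson--Vershynin and Tao--Vu anti-concentration machinery. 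I would present the Hermitization argument as the main proof, since it treats $\mathbb{F}\in\{\mathbb{R},\mathbb{C}\}$ uniformly, and add as a remark the shortcut specific to the complex case: the complex Ginibre ensemble \citep{ginibre1965statistical} is a determinantal point process with an explicit correlation kernel, so $\mathbb{E}\mu_n\to\mathcal{U}(\{|z|\le1\})$ follows from a Laplace-type estimate on a truncated exponential series, and almost sure convergence from the $O(1/n)$ variance of linear statistics afforded by the determinantal structure together with Borel--Cantelli (the real case being analogous but Pfaffian, with an additional atom of eigenvalues on the real line to track).
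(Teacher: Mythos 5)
The paper does not prove this statement—it is quoted as a known result and attributed to \citet{girko1985circular} and \citet{bai1997circular}—so there is no in-paper argument to compare against. Your sketch is a faithful outline of the standard proof from that literature: Girko's Hermitization via the logarithmic potential $U_{\mu_n}(z)=-\tfrac1n\log|\det(\W-zI)|$, Marchenko--Pastur/Dozier--Silverstein convergence of the singular-value ESD of $\W-zI$, the closed form $U_\mu(z)=\tfrac12(1-|z|^2)$ on the disk, and the least-singular-value control needed to handle the singularity of $\log$ at zero (which, as you say, is the easy case for Gaussian entries); your remark on the determinantal/Pfaffian shortcut for the Ginibre ensembles is also correct and is in fact the same explicit eigenvalue density the paper later uses in its proof of Theorem 5.2.
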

This result implies that, in the infinite-width limit, the spectrum of $\W$ is contained within the unit disk, and the \textit{spectral radius} converges to $1$. Consequently, each term in the linear recurrence formula (Eq.~\eqref{eq:lru}) maintains bounded magnitude in expectation for any fixed $k$, suggesting that Glorot initialization yields stable signal propagation in the infinite-width setting. 

Although prior work reported the empirical need to rescale Gaussian-initialized matrices to prevent hidden state explosion~\citep{gu2021efficiently}, the LRU architecture~\citep{orvieto2023resurrecting} motivates its initialization scheme based on Glorot and the circular law. In particular, they aim to imitate the spectral behavior of dense Gaussian matrices via complex diagonalized initialization. This reflects a common assumption in the RNN literature that using Glorot is enough to ensure stability.

However, this assumption overlooks the fact that recurrent networks apply the same weight matrix repeatedly, and small deviations in spectral radius—though negligible at each time step—can quickly compound over long sequences. In this work, we show that \emph{Glorot initialization is generally unstable when applied to linear recurrent layers}, especially under the double-scaling limit of large width and sequence length.

\section{Spectral Radius Deviations in Glorot-Initialized Matrices}\label{section:gaussian_matrices}

While the spectral radius of Glorot initialization converges to one in the infinite-width limit, recent results from random matrix theory provide a more refined characterization of the spectral edge. In particular, \citet{rider2014extremal} and \citet{rider2003limit} derive the asymptotic statistics for the largest eigenvalue of Gaussian matrices in the real and complex cases, respectively.

\begin{theorem}[Spectral radius, Theorem 1.1 of \cite{rider2014extremal} and Theorem 1 of \cite{rider2003limit}]\label{theorem:spectral_radius}
Assume that $\W\in\mathbb{F}^{n\times n}$ is sampled from real or complex Glorot (\cref{eq:glorot_init,eq:complex_glorot}), and denote the spectral radius by
\begin{equation}
    |\lambda_{\textup{max}}(\W)| := \max\{|\lambda| : \lambda \text{ is an eigenvalue of } \W \}.
\end{equation}
Then, we have the following convergence in distribution:
\begin{equation}
    \sqrt{4\rho_n n}\Biggl( |\lambda_{\textup{max}}(\W)| - 1 - \sqrt{\dfrac{\rho_n}{4n}}\Biggr)  \xrightarrow[n\to\infty]{d} G,
\end{equation}
where $\rho_n := \log(n/(2\pi(\log n)^2))$, $G$ is a Gumbel variable with CDF $F_G(x) = e^{-(1 -\delta_{\mathbb{R}}/2) e^{-x}}$, and $\delta_{\mathbb{R}}\in\{0,1\}$ is one for real $\W$ and zero otherwise.
\end{theorem}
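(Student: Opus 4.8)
The plan is to follow the route of \citet{rider2003limit} in the complex case and \citet{rider2014extremal} in the real case, reducing the spectral radius to an extreme-value problem for a family of \emph{independent} gamma variables and then carrying out a sharp Laplace-type asymptotic on the tail sum.

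First I would treat the complex case. Writing complex Glorot as $\W = n^{-1/2}\mathbf{G}$ with $\mathbf{G}$ a standard complex Ginibre matrix, the spectral radius is $|\lambda_{\textup{max}}(\W)| = n^{-1/2}\max_{1\le k\le n}\sqrt{\gamma_k}$, where by Kostlan's observation the unordered squared moduli $\{|\lambda_k(\mathbf{G})|^2\}_{k=1}^n$ have the same joint law as independent variables $\gamma_k\sim\mathrm{Gamma}(k,1)$. Hence the distribution function factorizes as $\mathbb{P}(|\lambda_{\textup{max}}(\W)|\le x) = \prod_{k=1}^n \mathbb{P}(\gamma_k \le nx^2)$. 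Since $\gamma_k$ concentrates at $k$ with $O(\sqrt{k})$ fluctuations, one has $n^{-1/2}\sqrt{\gamma_k}\approx\sqrt{k/n}$, so the maximum is driven by indices $k$ in a window of width $O(\sqrt{n\log n})$ just below $n$. Substituting $x = 1 + \sqrt{\rho_n/(4n)} + y/\sqrt{4\rho_n n}$, using $\log(1-p) = -p + O(p^2)$, and noting that each tail $\overline{P}_k(nx^2):=\mathbb{P}(\gamma_k > nx^2)$ is uniformly $o(1)$, the problem reduces to proving
\[
\sum_{k=1}^n \overline{P}_k(nx^2) \;\xrightarrow[n\to\infty]{}\; \Bigl(1-\tfrac{\delta_{\mathbb{R}}}{2}\Bigr)e^{-y},
\]
which then yields the Gumbel CDF $e^{-(1-\delta_{\mathbb{R}}/2)e^{-y}}$ with $\delta_{\mathbb{R}}=0$ in this case.

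The technical heart — and the step I expect to be the main obstacle — is the precise asymptotics of this sum. One needs a sharp form of the incomplete-gamma tail in the moderate-deviation window $nx^2 - k = \Theta(\sqrt{k\log k})$, essentially a Gaussian principal term $\exp(-(nx^2-k)^2/(2k))$ times explicitly controlled polynomial and $\sqrt{2\pi k}$ corrections, followed by an Euler--Maclaurin/Laplace evaluation of the resulting sum over $k$. The location of the maximizing index and the curvature there are exactly what pin down $\rho_n = \log(n/(2\pi(\log n)^2))$, the $\sqrt{\rho_n/(4n)}$ shift in the centering, and the $\sqrt{4\rho_n n}$ fluctuation scale; matching all constants — in particular the $(\log n)^2$ inside $\rho_n$ and the half-integer power of $\rho_n$ in the centering — forces careful bookkeeping of the error terms at both stages, and this is where essentially all the difficulty lies.

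For real Glorot the eigenvalue process is a Pfaffian point process containing both real eigenvalues and complex-conjugate pairs, so Kostlan's reduction is unavailable and the argument must be rebuilt at the level of correlation kernels. Here I would show that the $O(\sqrt{n})$ real eigenvalues lie within $O(1)$ of the real axis and therefore cannot attain the outermost modulus, so the edge is governed by the complex-conjugate pairs; restricting to the upper half-plane, the edge-scaled correlation kernel of the real Ginibre ensemble agrees to leading order with ``half'' of the complex Ginibre edge process, each pair contributing a single candidate for the largest modulus. This is the origin of the factor $1-\tfrac12$ in the Gumbel exponent. Turning this heuristic into a proof requires the explicit edge kernel together with a Bonferroni-type inclusion--exclusion argument bounding the count of eigenvalues beyond a moving threshold, and controlling that kernel sharply enough to isolate the factor of $\tfrac12$ is the analogue — and in the real case the dominant — obstacle.
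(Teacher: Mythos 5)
This theorem is imported by the paper without proof: it is quoted directly from \citet{rider2003limit} (complex case) and \citet{rider2014extremal} (real case), so there is no in-paper argument to compare against. Your outline is a faithful reconstruction of how those references actually proceed: Kostlan's reduction of the squared moduli to independent $\mathrm{Gamma}(k,1)$ variables, factorization of the CDF, reduction to $\sum_k \overline{P}_k \to (1-\delta_{\mathbb{R}}/2)e^{-y}$ via $\log(1-p)\approx -p$, and sharp incomplete-gamma asymptotics in the moderate-deviation window to pin down $\rho_n$ and the centering/scaling; and, in the real case, the Pfaffian edge-kernel analysis producing the factor $\tfrac12$ from conjugate pairing. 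One point of your sketch is genuinely misstated: the reason the $O(\sqrt{n})$ real eigenvalues do not govern the spectral radius is not that they ``lie within $O(1)$ of the real axis'' (they lie exactly on it, and a real eigenvalue of size $1+\sqrt{\rho_n/4n}$ would dominate just fine); it is that the rightmost real eigenvalue fluctuates around $1$ on the scale $n^{-1/2}$ \emph{without} the $\sqrt{\log n}$ extreme-value enhancement, because there are only $O(\sqrt n)$ candidates, whereas the $\Theta(n)$ complex eigenvalues push the maximum modulus up to $1+\sqrt{\rho_n/4n}$. With that correction, your plan matches the cited proofs.
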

This result shows that the spectral radius converges to $1$ from \textit{above}, and the expected size of the largest eigenvalue of Glorot initialization for sufficiently large $n$ is given by:
\begin{equation}\label{eq:max_eigenval}
    \mathbb{E}\bigl[|\lambda_{\textup{max}}(\W)| \bigr] \cong 1 +\sqrt{\dfrac{\rho_n}{4n}} + \dfrac{\gamma - \delta_{\mathbb{R}}\ln 2}{\sqrt{4\rho_n n}},
\end{equation}
where $\gamma$ is the Euler–Mascheroni constant. In words, the expectation of the largest eigenvalue exceeds one by $O(1/\sqrt{n})$. Additionally, \cref{theorem:spectral_radius} implies the following about the likelihood of stability:
\begin{corollary}
\label{lemma:modulus_1}
For real and complex Glorot-initialized $\W$, the probability that the largest eigenvalue lies inside the unit disk satisfies:
\begin{equation}
\mathbb{P}\Bigl(|\lambda_{\textup{max}}(\W)| < 1 \Bigr) \leq \mathbb{P}\Bigl(|\lambda_{\textup{max}}(\W)| < 1 + \sqrt{\dfrac{\rho_n}{4n}} \Bigr) \xrightarrow[n\to\infty]{} \dfrac{1}{e^{1-\delta_{\mathbb{R}}/2}}.
\end{equation}
\end{corollary}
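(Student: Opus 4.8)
The plan is to read off both the inequality and the limit directly from \cref{theorem:spectral_radius}. First, I would record the elementary fact that $\rho_n = \log\bigl(n/(2\pi(\log n)^2)\bigr) \to \infty$, so $\rho_n > 0$ for all sufficiently large $n$; hence $1 < 1 + \sqrt{\rho_n/(4n)}$, the event $\{|\lambda_{\textup{max}}(\W)| < 1\}$ is contained in $\{|\lambda_{\textup{max}}(\W)| < 1 + \sqrt{\rho_n/(4n)}\}$, and monotonicity of probability gives the first inequality in the statement.

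For the limit, I would introduce the rescaled quantity
\begin{equation*}
    S_n := \sqrt{4\rho_n n}\Bigl(|\lambda_{\textup{max}}(\W)| - 1 - \sqrt{\tfrac{\rho_n}{4n}}\Bigr),
\end{equation*}
and observe that, since $\sqrt{4\rho_n n} > 0$, the event $\{|\lambda_{\textup{max}}(\W)| < 1 + \sqrt{\rho_n/(4n)}\}$ coincides exactly with $\{S_n < 0\}$, so that $\mathbb{P}\bigl(|\lambda_{\textup{max}}(\W)| < 1 + \sqrt{\rho_n/(4n)}\bigr) = \mathbb{P}(S_n < 0)$. By \cref{theorem:spectral_radius}, $S_n \xrightarrow{d} G$, where $G$ is Gumbel with CDF $F_G(x) = e^{-(1-\delta_{\mathbb{R}}/2)e^{-x}}$. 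Since $F_G$ is continuous on all of $\mathbb{R}$, the point $0$ is a continuity point, so the Portmanteau characterization of convergence in distribution yields $\mathbb{P}(S_n < 0) \to F_G(0) = e^{-(1-\delta_{\mathbb{R}}/2)} = 1/e^{1-\delta_{\mathbb{R}}/2}$ (strict versus non-strict inequality is immaterial because $\mathbb{P}(G = 0) = 0$). Combining with the inclusion from the first step gives the corollary.

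The argument is essentially bookkeeping around the cited extremal-eigenvalue theorem; the only points needing a moment of care are verifying that $0$ is a continuity point of the limiting CDF — automatic for the Gumbel law — and that $\rho_n$ is eventually positive so that the claimed ordering of the two probabilities genuinely holds. I do not anticipate a real obstacle here: all the analytic difficulty is absorbed into \cref{theorem:spectral_radius}.
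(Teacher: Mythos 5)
Your argument is correct and is exactly the (implicit) derivation the paper intends: the paper states this as an immediate corollary of \cref{theorem:spectral_radius}, obtained by monotonicity of probability together with convergence in distribution evaluated at the continuity point $0$ of the Gumbel CDF, which is precisely your bookkeeping. No gaps.
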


Thus, even though the bulk spectrum lies within the unit disk, eigenvalues outside the unit circle are likely for large $n$, particularly in the complex case.

\paragraph{Connection to the double-scaling limit of neural networks.} The correction to the largest eigenvalue of order $\Theta(1/\sqrt{n})$ implies that sequence lengths on the order of $t=\Theta(\sqrt{n}^{1+\epsilon})$ are sufficient to induce exploding hidden states in linear LRUs. We make this statement precise in Section~\ref{section:inputs}. This stands in sharp contrast to the circular law intuition underlying Glorot initialization, which assumes stability based on the bulk spectrum being contained in the unit disk. However, our observation fits naturally into the emerging body of theoretical work on the \textit{double-scaling limit}, where both network width and depth grow large \citep{hanin2020products,li2021future,roberts2022principles}.  In this regime, it has been shown that a depth scaling of $L=\Theta(n)$ suffices to cause the explosion of fourth moments in MLPs \citep{seleznova2022neural}. Thus, our observation with respect to the spectral radius does not only reveal a flaw in the stability assumption behind Glorot initialization, but also quantifies the faster onset of instability in RNNs—arising from the repeated application of a shared weight matrix—compared to MLPs with independently sampled weights.

\section{Rescaled Glorot Initialization for Long-Range Stability} \label{section:rescaling}

\begin{figure}[t]
    \centering
    \setlength{\tabcolsep}{2pt}
    \renewcommand{\arraystretch}{1}

    \begin{tabular}{c c c c}
        & Eigenvalues Radius & Matrix Powers & Hidden States \\
        \rotatebox{90}{Glorot} &
        \adjustbox{valign=m}{\includegraphics[width=0.32\textwidth]{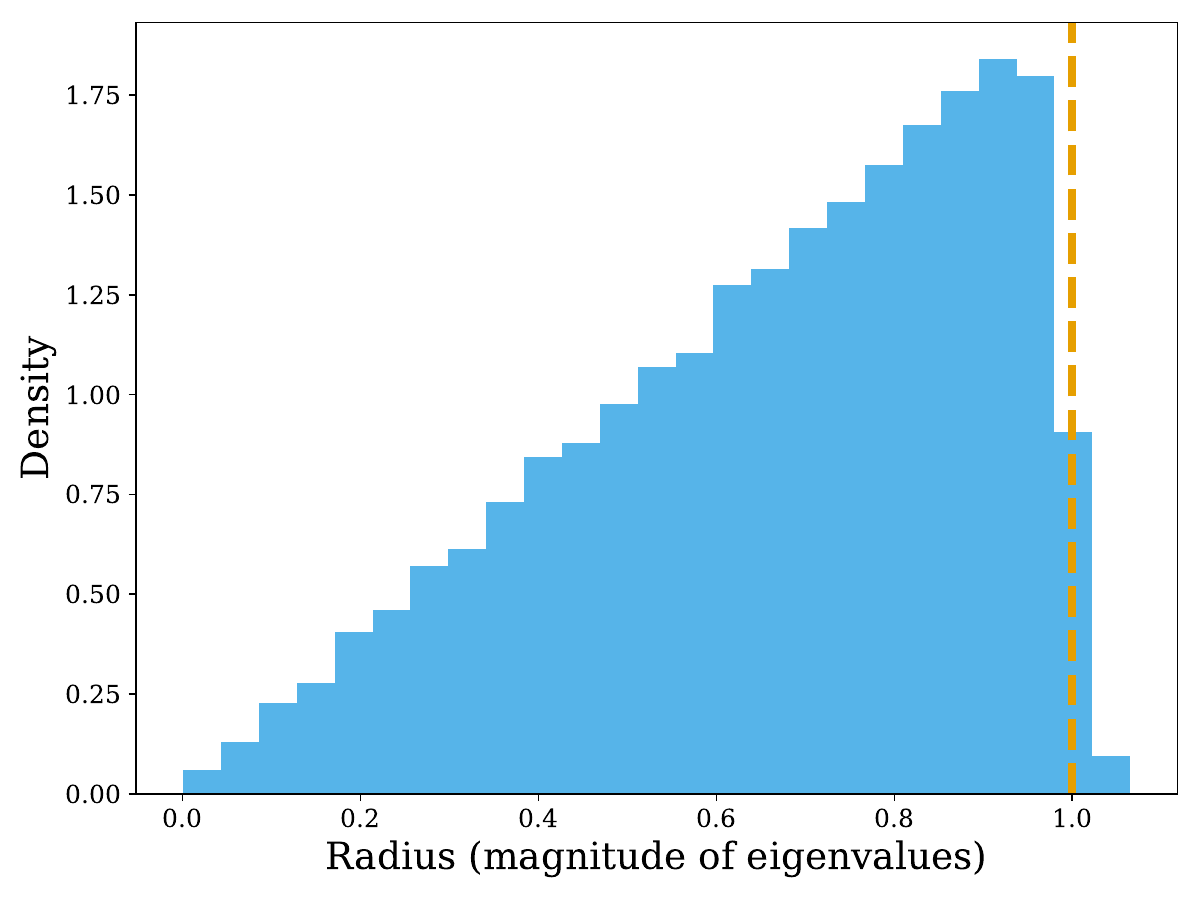}} &
        \adjustbox{valign=m}{\includegraphics[width=0.32\textwidth]{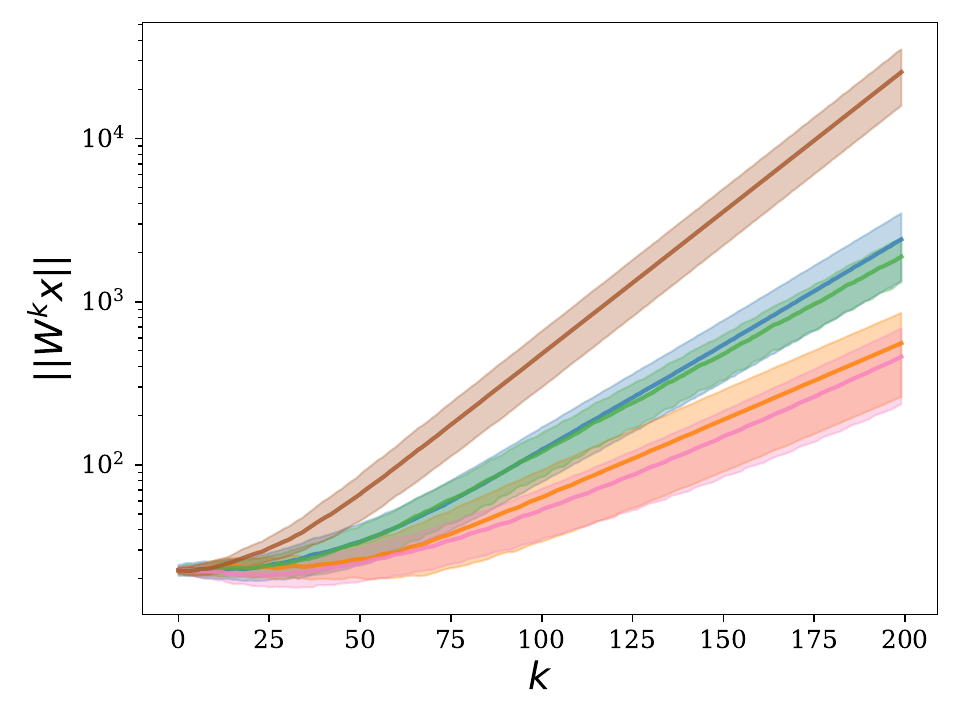}} &
        \adjustbox{valign=m}{\includegraphics[width=0.32\textwidth]{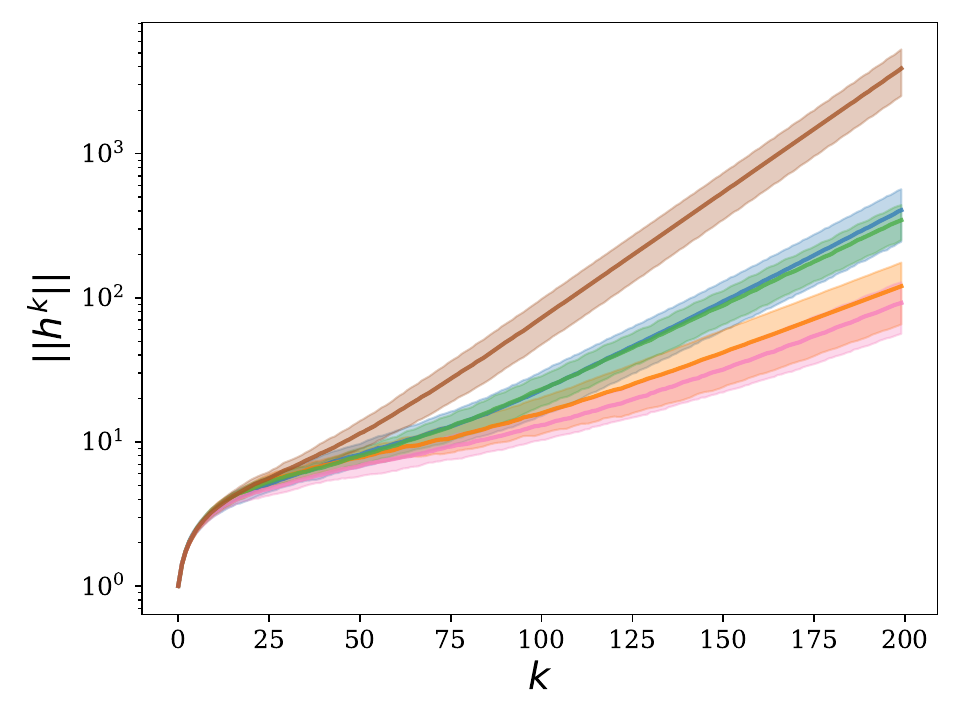}} \\
        \rotatebox{90}{Rescaled} &
        \adjustbox{valign=m}{\includegraphics[width=0.32\textwidth]{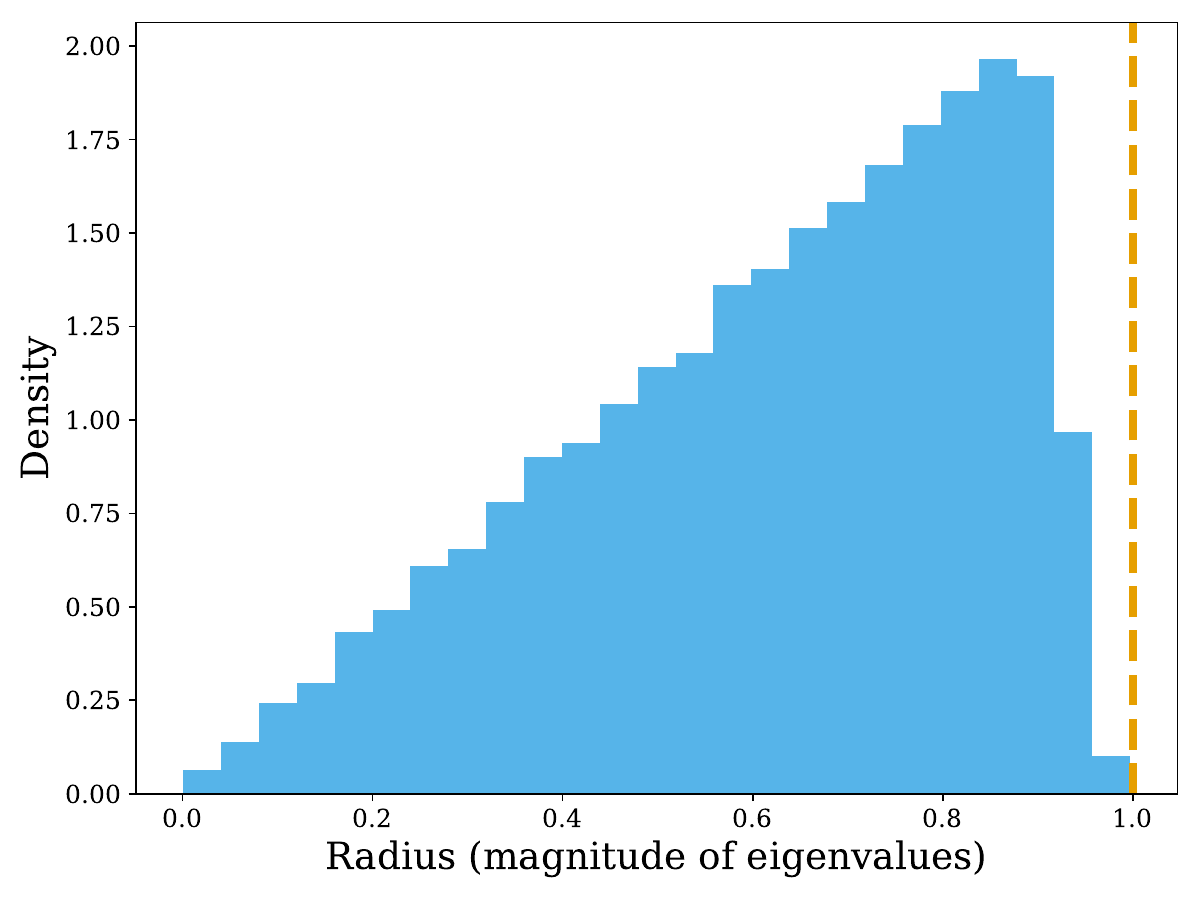}} &
        \adjustbox{valign=m}{\includegraphics[width=0.32\textwidth]{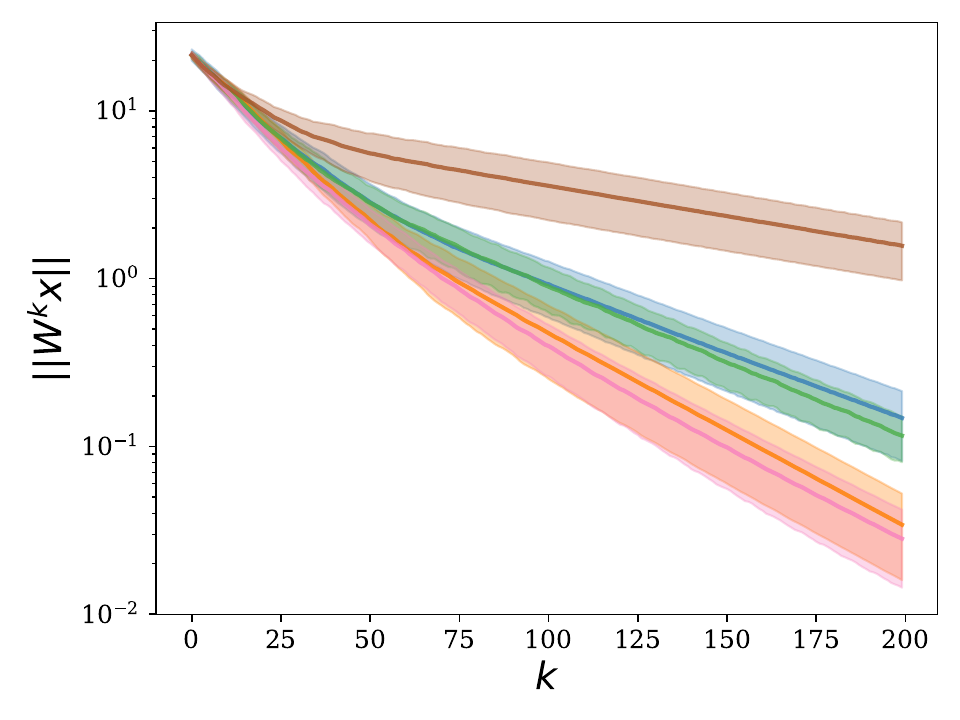}} &
        \adjustbox{valign=m}{\includegraphics[width=0.32\textwidth]{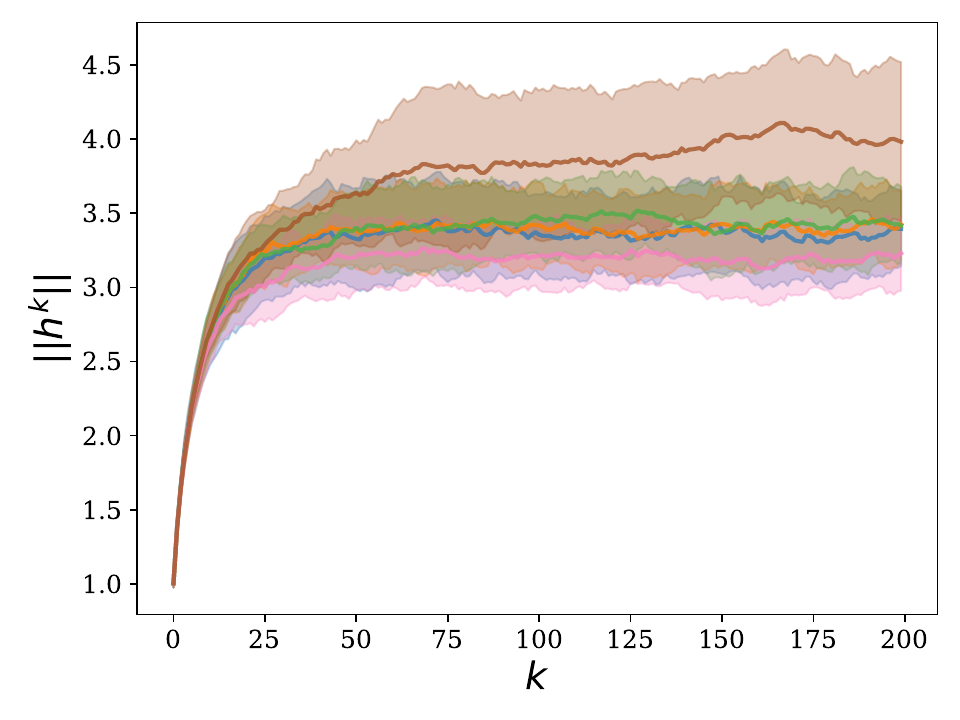}} \\
    \end{tabular}
    \caption{
    \textbf{Top vs bottom:} Glorot and rescaled initialization, respectively. 
    \textbf{Left:} Empirical density of spectral radius for 100 independent samples of $\W \in \mathbb{R}^{500\times500}$, with entries drawn i.i.d. from $\mathcal{N}(0,1/n)$. Glorot often yields eigenvalues with magnitudes exceeding one.
    \textbf{Middle:} Norms of $\|\W^k \x\|_2$ as a function of $k$, with each curve corresponding to a different realization of $\W$. Inputs $\x$ are sampled i.i.d. from $\mathcal{N}(0,\mathbb{I}_{500})$. We report the mean and variance over $50$ random input samples. Glorot leads to exploding norms; the rescaled variant produces slowly decaying norms.
    \textbf{Right:} Norms of hidden states $\|\h_t\|_2$ in a recurrent layer with i.i.d. Gaussian inputs. The mean and variance are computed over $50$ input realizations. Glorot initialization results in unstable (exploding) hidden states, while the rescaled initialization maintains stability over time.
    }\label{fig:Wx}\label{fig:hK}\label{hist_eigenvalues_radius}
\end{figure}

We now introduce a corrected initialization scheme with Gaussian matrices, aimed at stabilizing signal propagation in long-sequence settings. As shown in the previous section, the spectral radius of Glorot-initialized matrices often exceeds one by a small but non-negligible amount of order $O(1/\sqrt{n})$. When the same weight matrix is applied repeatedly in linear recurrent layers, this small bias is sufficient to induce an exponential growth of hidden states. Our goal is to modify the initialization so that the spectral radius remains slightly \emph{below} one, thus suppressing explosion while simultaneously avoiding premature vanishing.

To achieve this, we propose a variance rescaling that shifts the spectral radius to lie below one with a specified confidence level. This shift is guided by the quantiles of the Gumbel distribution, which govern the asymptotic behavior of the spectral radius under Glorot initialization (see \cref{theorem:spectral_radius}). Let $a_p$ denote the $p$-quantile of the Gumbel distribution, that is, $\mathbb{P}(G < a_p) = p$. Then, for sufficiently large $n$, the spectral radius of a Glorot-initialized matrix satisfies:
\begin{equation}\label{eq:max_eig}
    G < a_p \quad\Longleftrightarrow \quad|\lambda_{\textup{max}}(\W^{\text{glorot}})|  \lesssim 1 + \sqrt{\dfrac{\rho_n}{4n}} + \dfrac{a_p}{\sqrt{4\rho_n n}}.
\end{equation}
Since scaling the entries of a Gaussian matrix rescales all its eigenvalues proportionally, we define our corrected initialization by reducing the variance accordingly:
\begin{equation}
\W^{\text{rescaled}} \sim \mathcal{N}\left(0,\frac{1}{n}\left(1 + \sqrt{\frac{\rho_n}{4n}} + \frac{a_p}{\sqrt{4\rho_n n}}\right)^{-2}\right) \text{ i.i.d.}
\end{equation}
By design, this initialization has the following property for large enough $n$:
\begin{equation}
\mathbb{P}\left(|\lambda_{\textup{max}}(\W^{\text{rescaled}})|< 1\right) \cong p,
\end{equation}
and the following holds for the expectations:
\begin{equation}
    \mathbb{E}[|\lambda_{\textup{max}}(\W^{\text{rescaled}})|] =  \mathbb{E}[|\lambda_{\textup{max}}(\W^{\text{glorot}})|] \left(1 + \sqrt{\frac{\rho_n}{4n}} + \frac{a_p}{\sqrt{4\rho_n n}}\right)^{-1} \lesssim 1.
\end{equation}

\paragraph{Choice of $a_p$ parameter.} 
The value of $a_p$ should be chosen so that the spectral radius is likely to remain below one, but close enough to one to avoid rapid signal decay—since vanishing dynamics limit the memory capacity. Importantly, the consequences of positive and negative deviations from the unit spectral radius are asymmetric: while a slight contraction reduces memory gradually, a slight expansion leads to rapid exponential growth due to the repeated accumulation of unstable terms in the hidden state. This asymmetry makes explosion far more damaging than mild vanishing.

In our work, we set $a_p = \gamma - \delta_{\mathbb{R}}\ln2+\pi/\sqrt{6}$, corresponding to one standard deviation above the Gumbel mean, and yielding $p \approx 0.86$. This choice significantly increases the likelihood that the spectral radius lies within the unit disk: from roughly $p = e^{-1/2} \approx 0.61$ in the real Glorot case and $p = e^{-1} \approx 0.37$ in the complex case, to $p \approx 0.86$ after rescaling. Notably, at the infinite-width limit, the largest eigenvalue modulus still converges to one, preserving compatibility with the circular law.

\paragraph{Empirical validation.} Numerical simulations in \cref{fig:hK} (Left) clearly show that real Glorot-initialized matrices often exhibit a spectral radius exceeding one, whereas our corrected initialization makes this much less likely. As shown in \cref{fig:Wx} (Middle), the proposed scaling also prevents the norm $\|\W^k\x\|$ from exploding for sequence lengths of order $k = O(\sqrt{n})$, in contrast to the standard Glorot initialization, where $\|\W^k\x\|$ grows rapidly. Furthermore, we compare the effect of both initializations on the evolution of the hidden states, demonstrating that our scaling mitigates instability in both $\W^k \x$ and the hidden states $\h_k$. A rigorous analysis covering both finite and infinite input sequences is provided in \cref{section:inputs}, theoretically supporting the observed numerical behavior.

\section{Explosion of Hidden States with Long Sequences}\label{section:inputs}
We analyze forward signal propagation in linear recurrent layers initialized with complex Glorot. Our key result is a lower bound on the hidden state norm in the setting of \textit{finite width and finite input length}. By examining how this bound behaves in two regimes—one where width tends to infinity with fixed sequence length, and another where both grow simultaneously—we show that hidden states remain stable in the former, but grow exponentially in the double-scaling regime.

\subsection{Lower Bound on Hidden State Growth} \label{section:linear_hidden_states}

Since the distribution of eigenvalues of a complex Glorot matrix is known exactly from random matrix theory and has a relatively tractable form \citep{tao2012topics}, we can use it to lower bound the growth of the hidden state summands for finite $n$ and $k$ in the following theorem (see proof in \cref{section:proofs}). 
\begin{theorem}[Variance lower bound]\label{theorem:lower_bound}
    Suppose $\W\in\mathbb{C}^{n\times n}$ is sampled from a complex Glorot initialization (\cref{eq:complex_glorot}), and $\x\sim\mathcal{N}(0,\mathbb{ I}_n)$ is independent of $\W$. Then the following holds for any $k,n\in\mathbb{N}$:
    \begin{equation}
    \mathbb{E}[\|\W^k\x\|^2_2] \geq \dfrac{1}{n^{k}}\dfrac{n}{k+1}\dfrac{(n+k)!}{n!}.
\end{equation}
\end{theorem}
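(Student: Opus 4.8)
The plan is to compute $\mathbb{E}[\|\W^k\x\|_2^2]$ exactly by first taking the expectation over the Gaussian input $\x$, conditionally on $\W$, and then taking the expectation over $\W$. Since $\x\sim\mathcal{N}(0,\mathbb{I}_n)$ is independent of $\W$, we have $\mathbb{E}_{\x}[\|\W^k\x\|_2^2] = \mathbb{E}_{\x}[\x^* (\W^k)^* \W^k \x] = \tr\bigl((\W^k)^*\W^k\bigr) = \|\W^k\|_F^2$. Thus the quantity of interest is exactly $\mathbb{E}_{\W}\bigl[\|\W^k\|_F^2\bigr] = \mathbb{E}_{\W}\bigl[\tr(\W^k (\W^k)^*)\bigr]$, and the problem reduces to a pure random-matrix moment computation for the complex Ginibre ensemble (rescaled so that entries have variance $1/n$).

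The next step is to expand $\tr(\W^k(\W^k)^*)$ as a sum over index paths: writing $\W^k (\W^k)^* = \W^k (\W^*)^k$, we get $\mathbb{E}[\|\W^k\|_F^2] = \sum \mathbb{E}\bigl[W_{i_0 i_1}W_{i_1 i_2}\cdots W_{i_{k-1}i_k}\overline{W_{j_0 i_k}}\,\overline{W_{j_1 j_0}}\cdots\bigr]$ over all closed index sequences. Using Wick's theorem / the fact that for complex Gaussians only matched pairs $W_{ab}\overline{W_{ab}}$ contribute (and each contributes the per-entry variance, which for the complex Glorot normalization is $\mathbb{E}[|W_{ab}|^2] = 1/n$), this combinatorial sum counts non-crossing-type pairings of the two length-$k$ paths. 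Rather than grinding through the full Wick expansion, I would instead use the cleaner route via the known joint eigenvalue/singular-value structure: the Frobenius norm satisfies $\|\W^k\|_F^2 = \sum_{i} s_i(\W^k)^2$ where $s_i$ are singular values, but more directly, $\mathbb{E}[\tr(\W^k (\W^*)^k)]$ can be computed recursively. Define $M_k := \mathbb{E}[\tr(\W^k(\W^*)^k)]$. One can set up a recursion: conditioning on the Wick contraction of the "outermost" pair of factors $W_{i_{k-1}i_k}$ and $\overline{W_{j_0 i_k}}$ forces $i_{k-1}=j_0$ and $i_k$ free, contributing a factor $n\cdot(1/n)$ from summing the free index $i_k$; the remaining factors form $\tr(\W^{k-1}(\W^*)^{k-1})$ but with a modified weight accounting for the other cross-contractions. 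Carefully tracking this yields a recursion whose solution is the stated product $\frac{1}{n^k}\cdot\frac{n}{k+1}\cdot\frac{(n+k)!}{n!}$ — which one recognizes as $\frac{1}{n^k}\cdot\frac{n}{k+1}\binom{n+k}{k}k! = \frac{1}{n^k}\frac{n\cdot k!}{k+1}\binom{n+k}{k}$; the Catalan-number factor $\frac{1}{k+1}\binom{2k}{k}$-flavored structure strongly suggests the recursion is the standard one for moments of Wishart-type matrices.

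Actually, the most transparent argument: the combinatorial identity $\mathbb{E}[\tr(\W^k(\W^*)^k)] = \frac{1}{n^k}\sum_{\pi} n^{\#\text{cycles}}$ over a suitable set of permutations $\pi$, and the claimed bound is obtained by keeping only the \emph{dominant} non-crossing pairings — so actually we get a lower bound (not necessarily equality) by retaining a subset of positive terms. Since every Wick contraction contributes a nonnegative term $n^{c}/n^k \geq 0$, we are free to discard terms, and it suffices to exhibit one family of contractions whose total equals $\frac{1}{n^k}\frac{n}{k+1}\frac{(n+k)!}{n!}$. The natural family is the "fully nested" pairings: pair $W_{i_{a}i_{a+1}}$ with $\overline{W_{j_{k-1-a}j_{k-a}}}$ in nested fashion; summing over the resulting free indices with the constraint structure gives a product of the form $\prod_{a=0}^{k-1}(n - a + \text{something})$, and choosing the pairing family that yields rising factorials produces exactly $(n+k)!/n! = n(n+1)\cdots(n+k)$ divided by appropriate factors. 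The main obstacle I anticipate is bookkeeping the index constraints correctly so that the free-index sum evaluates to the rising factorial $\frac{(n+k)!}{n!}$ rather than a falling factorial — i.e., getting the sign/direction of the increments right — and justifying cleanly that the discarded Wick terms are genuinely nonnegative (immediate for the complex Ginibre ensemble, since each surviving monomial is a product of $\mathbb{E}[|W_{ab}|^2]=1/n>0$ factors times $n^{\#\text{free loops}}$). I would close the proof by verifying the $k=0$ and $k=1$ base cases ($M_0 = n$, $M_1 = \mathbb{E}\|\W\|_F^2 = n^2\cdot\frac1n = n$, matching $\frac{1}{n}\cdot\frac{n}{2}\cdot\frac{(n+1)!}{n!} = \frac{n+1}{2}$ — wait, this needs $M_1 \geq (n+1)/2$, which holds) and invoking the recursion or the explicit nested-pairing count for the inductive step.
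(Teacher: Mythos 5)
Your opening reduction is correct and matches the paper's first step: averaging over $\x$ gives $\mathbb{E}[\|\W^k\x\|_2^2]=\mathbb{E}[\tr((\W^k)^*\W^k)]$. After that, however, there is a genuine gap. First, nothing is actually computed: the ``recursion whose solution is the stated product'' and the ``fully nested pairings'' count are placeholders, and you yourself flag the index bookkeeping as an unresolved obstacle. Second, and more fundamentally, the specific plan---exhibit a subfamily of Wick contractions whose total \emph{equals} $\frac{1}{n^k}\frac{n}{k+1}\frac{(n+k)!}{n!}$---cannot succeed. Every Wick term in $\mathbb{E}[\tr(\W^k(\W^*)^k)]$ has the form $n^{c-k}$ for an integer loop count $c$, so $n^k$ times any sub-sum is a polynomial in $n$ with nonnegative \emph{integer} coefficients; but $n^k\cdot\mathrm{RHS}=\frac{n(n+1)\cdots(n+k)}{k+1}$ is not such a polynomial. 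Your own base case already shows this: for $k=1$ the Wick expansion consists of a single term equal to $n$, so the only achievable sub-sums are $0$ and $n$, neither of which is the target $(n+1)/2$. The bound is simply not a sub-sum of the moment expansion, so ``discard nonnegative terms'' is the wrong mechanism for producing this particular right-hand side (it could at best yield \emph{some} lower bound, which you would then still have to compare to the stated one).

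The paper obtains the inequality from a different, spectral source. Via the Schur decomposition $\W=\U\T\U^{-1}$ with $\T$ upper triangular, one has $\tr((\W^k)^*\W^k)=\tr((\T^k)^*\T^k)\ge\sum_{i=1}^n|\lambda_i|^{2k}$, the discarded nonnegative part being the strictly upper-triangular entries of $\T^k$ (not Wick terms). The expectation $\mathbb{E}\bigl[\sum_i|\lambda_i|^{2k}\bigr]$ is then computed \emph{exactly} from Ginibre's eigenvalue density, and a hockey-stick summation of $\sum_{s=0}^{n-1}(k+s)!/s!$ produces the factor $\frac{1}{k+1}\frac{(n+k)!}{n!}$. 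If you want to salvage a combinatorial route, you would need to prove the exact value of $\mathbb{E}[\tr(\W^k(\W^*)^k)]$ (or a closed-form lower bound for it) and separately verify it dominates the stated expression---a substantially harder task than the eigenvalue-density calculation.
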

Suppose the inputs sequence $(\x_1,\dots,\x_t)$ is such that each input is sampled i.i.d. from $\mathcal{N}(0,\mathbb{I}_n)$. Then the above result is directly related to the hidden state as follows:
\begin{equation}
    \mathbb{E}[\|\h_t\|_2^2] = \sum_{k=0}^t \mathbb{E}[\x_{t-k}^\top(\W^k)^\top \W^k \x_{t-k}] = \sum_{k=0}^t \mathbb{E}\bigl[\|\W^k\x_{t-k}\|_2^2\bigr].
\end{equation}

Since the result is derived for finite width and sequence length, both the infinite-width and double-scaling regimes arise as special cases of this general setting. Finite-width results remain relatively rare in deep learning theory, as they often require tedious combinatorial arguments—such as the path counting techniques used for ReLU networks~\citep{hanin2019finite}. In contrast, our proof avoids path counting entirely and relies instead on the distributional density of eigenvalues, enabling a concise and tractable analysis.

In the following, we examine the implications of this result for the behavior of hidden states in both the infinite-width and double-scaling regimes.

\subsubsection{Stability with Infinite Width and Finite Length} \label{section:finite_hidden_states}
We first consider the fixed-length and infinite-width regime, i.e. $t=O(1)$ and $n\to\infty$, which is the prevalent setting in the literature. In this regime, we get the following for a single summand of the hidden state variance, for each $k\leq t$:
\begin{equation}\label{eq:fin_k_lwer_bound}
    \lim_{n\to\infty}\dfrac{1}{n} \mathbb{E}[\|\W^k\x\|^2_2] \geq \dfrac{1}{k+1}
\end{equation}
From this, it follows that the hidden state variance lower bound is described by the harmonic series:
\begin{equation}\label{eq:finite_len}
     \lim_{n\to\infty}\dfrac{1}{n} \mathbb{E}[\|\h_t\|^2_2] \geq \sum_{k=0}^t \dfrac{1}{k+1} = \log (t-1) + \gamma + \dfrac{1}{2t} - O \Bigl(\dfrac{1}{t^2}\Bigr).
\end{equation}
The hidden states exhibit logarithmic growth with respect to $t$. Although this implies that the variance of the hidden states is theoretically unbounded, in practice the growth remains mild and is unlikely to significantly disrupt stability when sequences are short. This finding aligns with existing literature on signal propagation in recurrent networks. However, as we demonstrate in the next section, this benign behavior does not extend to the more realistic scenario of long sequence lengths.

\subsubsection{Instability with Infinite-Width-and-Length} \label{section:infinite_length}
We analyze the behavior in the infinite-length regime, which is the central focus of our work.
To obtain a meaningful bound in this setting, we consider the limit as $t, n \to \infty$ with $t = \Theta(\sqrt{n})$.
Under this scaling, we can use the following asymptotic approximation:
\begin{align}
    \dfrac{1}{n^k}\dfrac{(k+n)!}{n!} = \prod_{d=1}^k \dfrac{n+d}{n} = \prod_{d=1}^k \Bigl( 1 + \dfrac{d}{n}\Bigr) \approx e^{\frac{k(k+1)}{2n}} \approx e^{k^2/2n},
\end{align}
where we observed that the terms $\frac{d}{n}$ satisfy $0 < \frac{d}{n} \ll 1$ for all $d \leq k \leq t$ and vanish in the limit. This approximation can be seen as the following first-order expansion of the logarithm, and we make it more precise in Appendix \ref{sec:proofs_double_scaling}:
\begin{equation}
    \log\left( 1+\frac{d}{n} \right) \approx \frac{d}{n} \quad \Longrightarrow \quad \log\left( \prod_{d=1}^k \left( 1+\frac{d}{n} \right)  \right) \approx \sum_{d=1}^k \frac{d}{n} = \frac{k(k+1)}{2n}.
\end{equation}
Therefore, under standard Glorot initialization, the expected norm $\|\W^k \x\|$ grows with the power $k \leq t$ when the sequence length is sufficiently large—unlike in the infinite-width regime (Eq.~\eqref{eq:fin_k_lwer_bound}):
\begin{equation}
    \lim_{\substack{t,n \to \infty \\ t/\sqrt{n} \to \alpha \in\mathbb{R}_+}} \dfrac{k+1}{n}\mathbb{E}[\|\W^k \x\|^2_2] \gtrsim e^{k^2/2n}.
\end{equation}
This asymptotic growth matches empirical observations in both the real and complex settings (see \cref{fig:hK,section:complex_simulation}). Consequently, the hidden state norm for large $t$ and $n$ can be approximated as follows:
\begin{equation}
    \lim_{\substack{t,n \to \infty \\ t/\sqrt{n} \to \alpha \in\mathbb{R}_+}} \dfrac{1}{n}\mathbb{E}[\|\h_t\|^2_2]  \gtrsim \sum_{k=0}^t\dfrac{\exp(k^2/2n)}{k+1} 	\approx \dfrac{n}{t}\exp\Bigl(\dfrac{t^2}{2n}\Bigr).
\end{equation}
Hence, when the sequence length $t$ is at least of order $\Theta(\sqrt{n})$, the hidden state norm begins to diverge—and this behavior becomes more pronounced under more aggressive scaling. For instance, when $t = \alpha \sqrt{n}^{1+\epsilon}$ for any $\epsilon>0$, the lower bound diverges rapidly.

\begin{remark}[Real case]
While the analysis in this section focuses on the analytically tractable complex Gaussian setting, we note that real Gaussian matrices exhibit qualitatively similar behavior. This is due to the close agreement in spectral statistics between real and complex non-Hermitian Gaussian ensembles~\citep{tao2008random}. However, the spectral density in the real case involves more intricate expressions, making analogous derivations significantly more tedious.
We provide supporting numerical simulations and an additional theoretical discussion of the real case in Appendix~\ref{sec:appendix_real_case}.
\end{remark}

To summarize, our findings underscore the need to rescale Glorot initialization in order to ensure stability during long-sequence processing, especially in the double-scaling regime. Next, we demonstrate the applicability of our findings to the training of linear recurrent networks.

\section{Experiments}\label{section:experiments}
In addition to our empirical validation of signal propagation at initialization (Figure~\ref{fig:Wx}), we evaluate the effectiveness of our rescaled initialization on standard long-range sequence modeling benchmarks.

\begin{wraptable}{R}{0.5\textwidth}
    \centering
    \caption{Classification accuracy for dense and diagonal linear recurrent models. Note the benefit of using our rescaling.}\label{table:experimets}
    \begin{adjustbox}{width=0.5\textwidth}
    \begin{tabular}{lccc}
    \toprule
        ~ & CIFAR-10 & IMDB & ListOps  \\  \midrule
        Glorot & $\xmark$ & $\xmark$ & $\xmark$ \\
        Glorot/2 & 76.4{\tiny$\pm$ 0.004} & \textbf{88.74}{\tiny $\pm$0.002} &  46.21 $\pm$ {\tiny 0.045} \\ 
        Rescale & \textbf{81.54}{\tiny $\pm$ 0.009} & 87.55 {\tiny $\pm$ 0.002} & \textbf{47.51} {\tiny $\pm$ 0.011} \\ \midrule
        Diag. Glorot & $\xmark$ & $\xmark$ & $\xmark$ \\
        Diag. Glorot/2 & 71.6{\tiny $\pm$ 0.006}& 86.67{\tiny $\pm$ 0.002} &  49.01{\tiny$\pm$ 0.004} \\ 
        Diag. Uniform & \textbf{81.47}{\tiny $\pm$ 0.002} & 86.34{\tiny$\pm$ 0.004} & 48.93{\tiny$\pm$ 0.004} \\ 
        Diag. Rescaled & 79.53{\tiny $\pm$ 0.009} & \textbf{87.31}{\tiny$\pm$ 0.001} &  \textbf{49.49} {\tiny$\pm$ 0.005} \\ 
\bottomrule 
    \end{tabular}
    \end{adjustbox}
\end{wraptable}
We conduct experiments on three classification tasks drawn from the Long Range Arena (LRA) benchmark~\citep{tay2020long}: Sequential CIFAR-10, IMDB, and ListOps. In the Sequential CIFAR-10 task, image pixels are flattened into sequences of length 3K. ListOps consists of nested arithmetic expressions over single-digit integers, with a 10-class output and maximum sequence length of 2K. IMDB is a binary sentiment classification task with input sequences of up to 8K tokens.

We compare our rescaled initialization against the standard dense Glorot initialization, and a naive non-exploding baseline of Glorot/2 (i.e., $\W\sim\mathcal{N}(0,1/2n)$).
Additionally, we implement diagonalized variants of these initializations, which improve computational efficiency in both runtime and memory consumption. These are more closely aligned with the LRU architecture~\citep{orvieto2023resurrecting}.
To evaluate our method in the diagonal setting while preserving spectral characteristics, we sample a dense matrix and initialize the diagonal using its eigenvalues. Additionally, we include a circular-law baseline in which diagonal entries are sampled uniformly from the complex unit circle, following prior work~\citep{orvieto2023resurrecting}.

We release the code with the paper, and provide more details on the experimental setup in \cref{section:exp_details}.

\paragraph{Results.}
\Cref{table:experimets} summarizes our results. As expected, models initialized with standard Glorot failed to train due to exploding hidden states.
Across the remaining baselines, our rescaled initialization achieved the highest accuracy in 4 out of 6 settings, outperforming both Glorot/2 and the unit-circle-based diagonal initialization. These results demonstrate that our simple rescaling provides a practical and theoretically grounded improvement over the standard approaches. Although these experiments are conducted on linear RNNs and do not directly compete against state-of-the-art LRUs or SSMs, they offer important initial evidence supporting theoretically-principled initialization.

\section{Conclusion} \label{section:conclusion}
We revisited the widely used Glorot initialization in the context of linear RNNs and showed that, contrary to common assumptions, it leads to instability when applied to long input sequences. Our findings highlight that initialization schemes derived under the infinite-width, fixed-depth regime may not generalize to the infinite-input-length setting characteristic of recurrent architectures. This work contributes to the growing body of theoretical research on the double-scaling limit in deep learning, and to the best of our knowledge, is among the first to explore this regime in the context of RNNs.

\paragraph{Limitations and Future Work.}
Our analysis focuses on the analytically tractable setting of complex Gaussian initialization in linear recurrent networks. A key limitation is that it provides only a lower bound on the hidden state norm, derived under the simplified assumption of uncorrelated Gaussian inputs. Extending this analysis to more realistic scenarios with temporally correlated inputs—and refining the bound to an exact characterization—would likely require addressing open problems in random matrix theory, particularly regarding the spectral behavior of high powers of random matrices.

Another promising direction is the development of a rigorous theoretical framework for common initialization heuristics used in practice, such as those that enforce a minimum spectral radius or constrain eigenvalues to lie predominantly along the positive real axis. A deeper understanding of these strategies could yield more stable and effective architectures for long-sequence modeling.

Finally, like many existing works, our analysis is limited to the behavior of networks at random initialization. While this provides foundational insight, a natural next step is to study the optimization dynamics of recurrent models in the infinite-input-length regime.

\section*{Acknowledgement} 
We thank KLA and The Center for AI \& Data Science at Tel Aviv University (TAD) for supporting this research.

\bibliographystyle{abbrvnat}
\bibliography{ssm_ntk}

\newpage

\section{Numerical Simulation for the Complex Case} \label{section:complex_simulation}
Along with the experiments illustrating the behavior of the hidden state and its summands for real Glorot matrices in Section~\ref{section:rescaling}, we include numerical simulations for the complex case here. The results are shown in \cref{fig:complex}. As in the real case discussed in the main text, complex Glorot initialization leads to an explosion in the hidden state norm, while our proposed rescaling effectively mitigates this.

\begin{figure}[h]
    \centering
    \setlength{\tabcolsep}{2pt}
    \renewcommand{\arraystretch}{1}

    \begin{tabular}{c c c c}
        & Eigenvalues Radius & Matrix Powers & Hidden States \\
        \rotatebox{90}{Glorot} &
        \adjustbox{valign=m}{\includegraphics[width=0.32\textwidth]{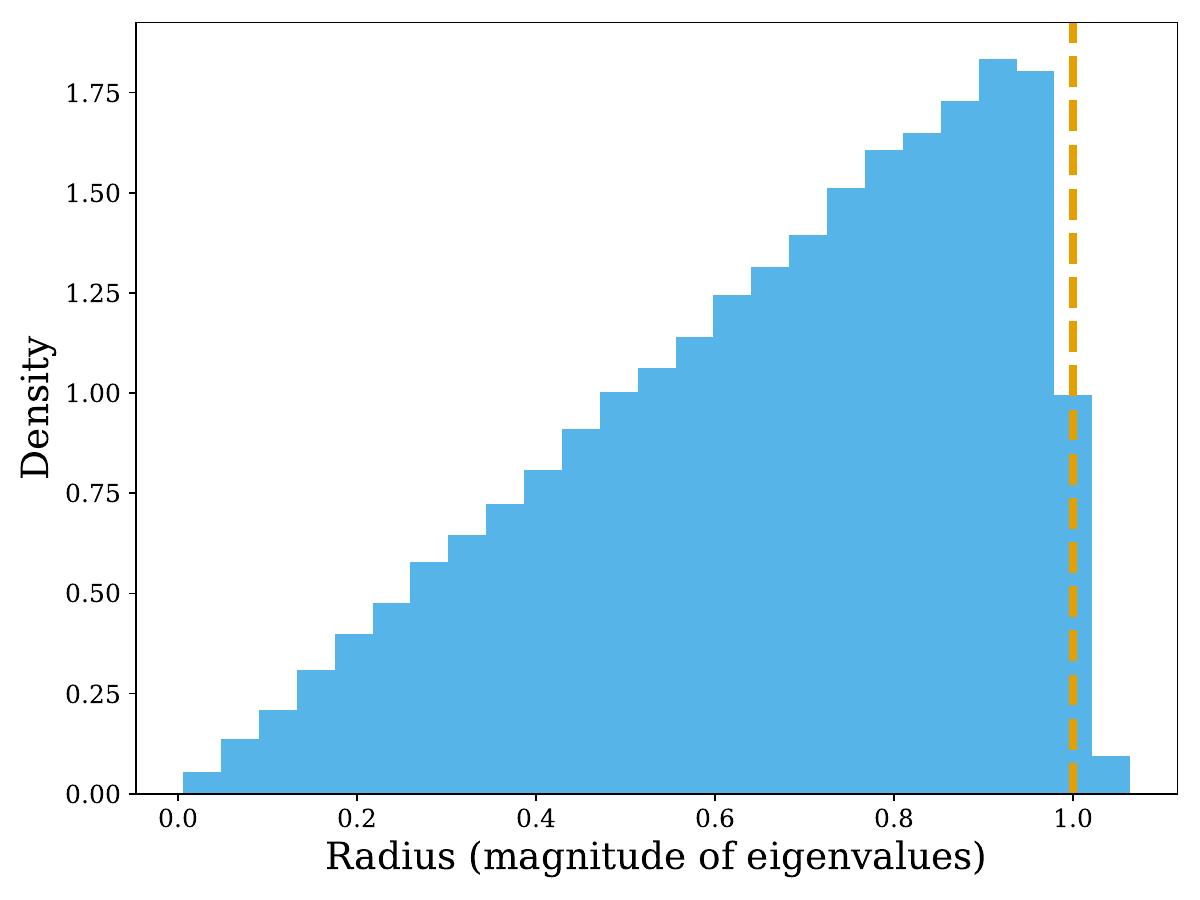}} &
        \adjustbox{valign=m}{\includegraphics[width=0.32\textwidth]{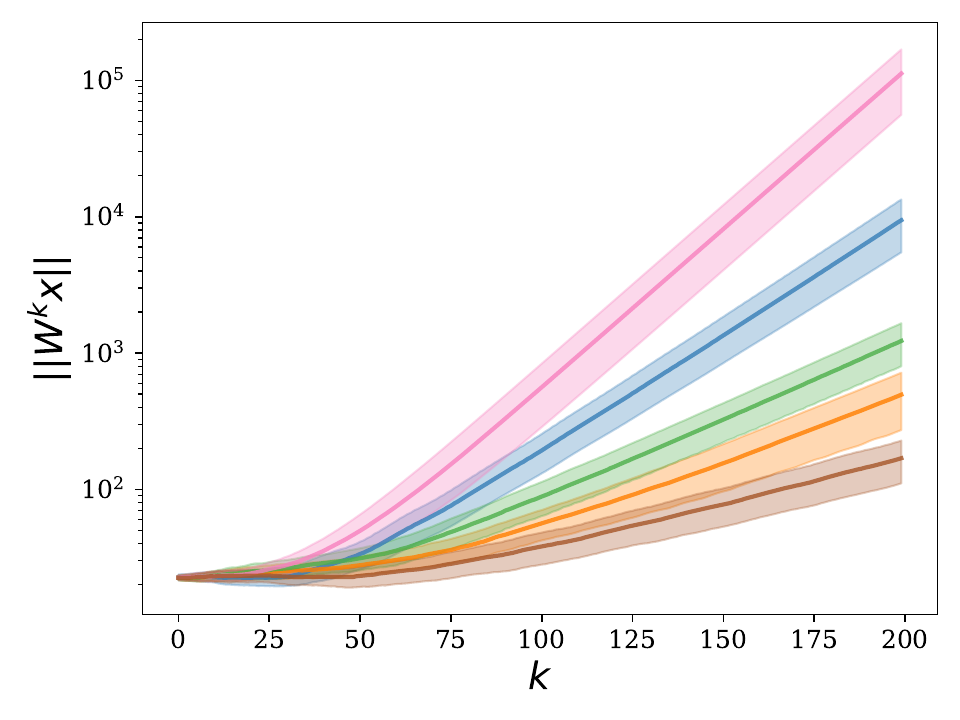}} &
        \adjustbox{valign=m}{\includegraphics[width=0.32\textwidth]{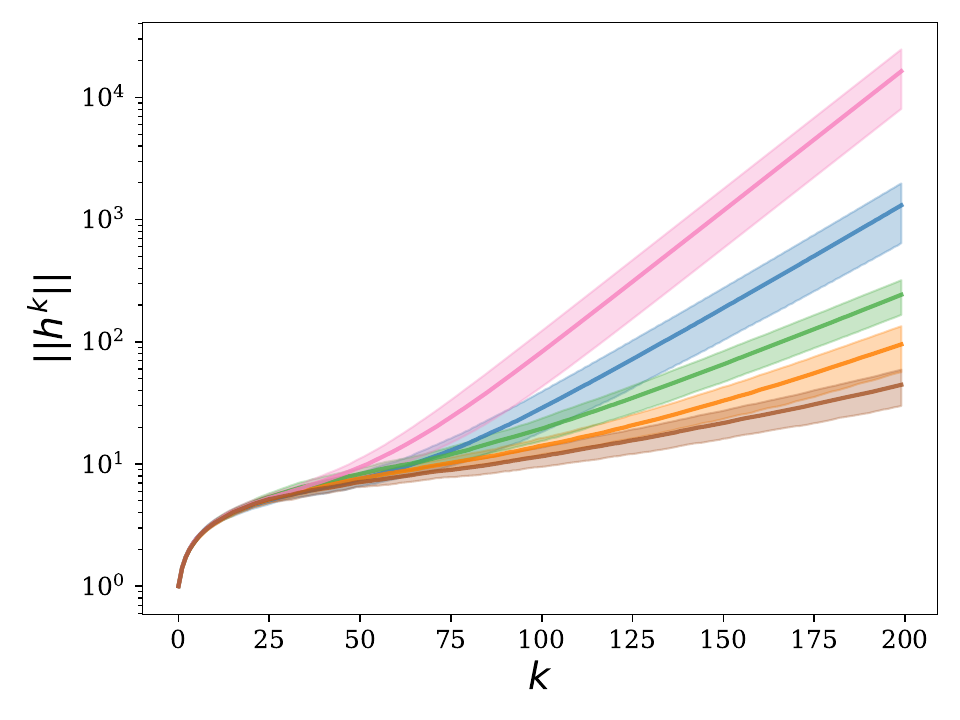}} \\
        \rotatebox{90}{Rescaled} &
        \adjustbox{valign=m}{\includegraphics[width=0.32\textwidth]{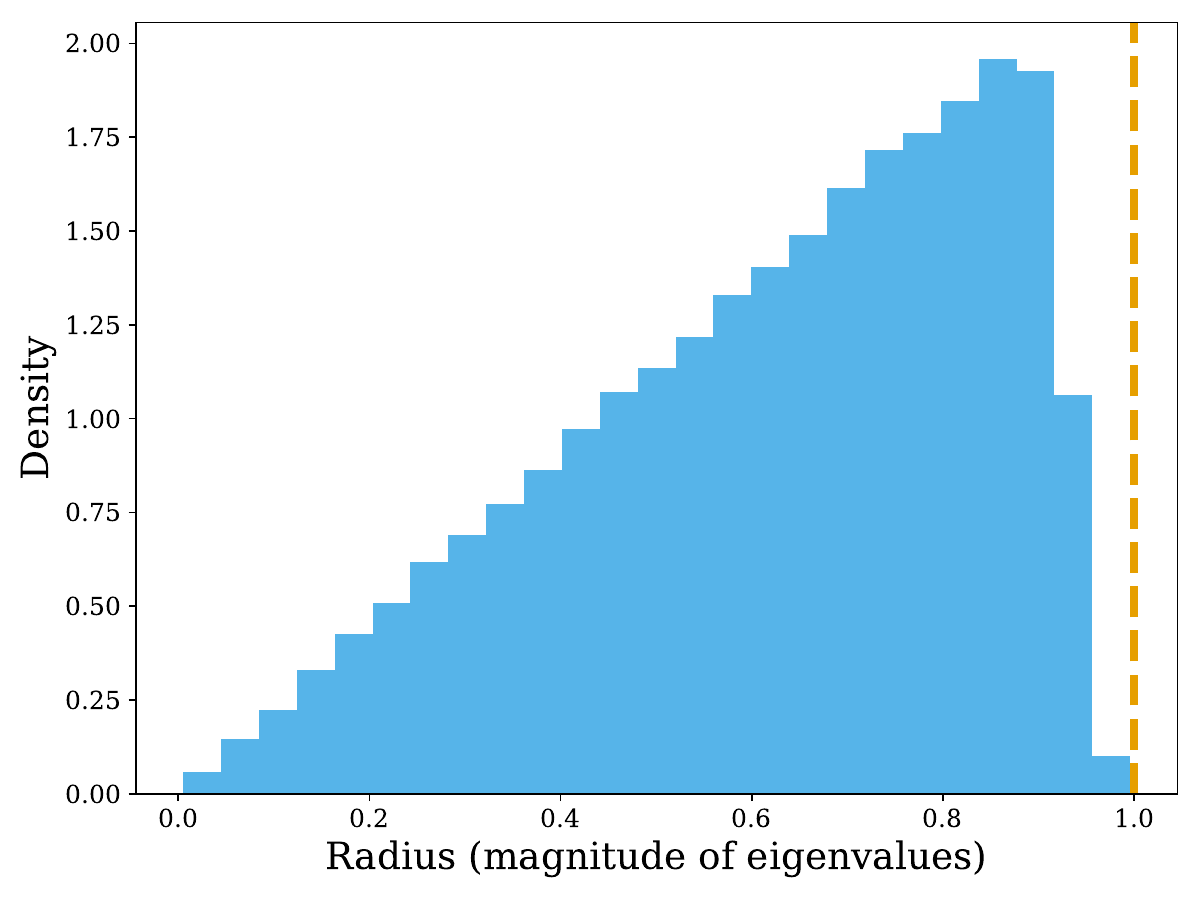}} &
        \adjustbox{valign=m}{\includegraphics[width=0.32\textwidth]{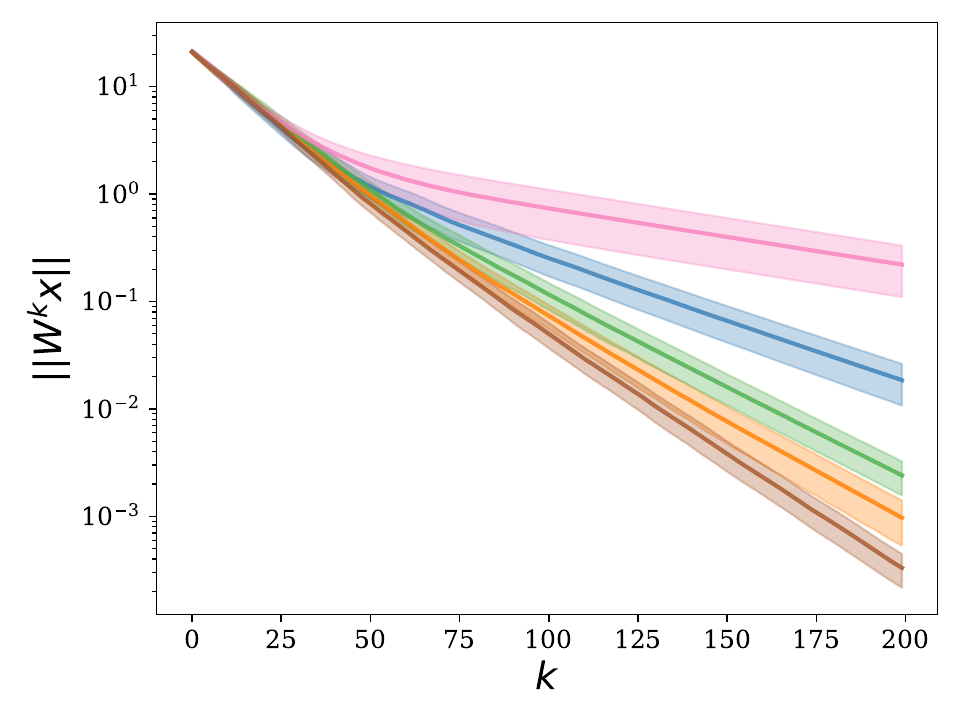}} &
        \adjustbox{valign=m}{\includegraphics[width=0.32\textwidth]{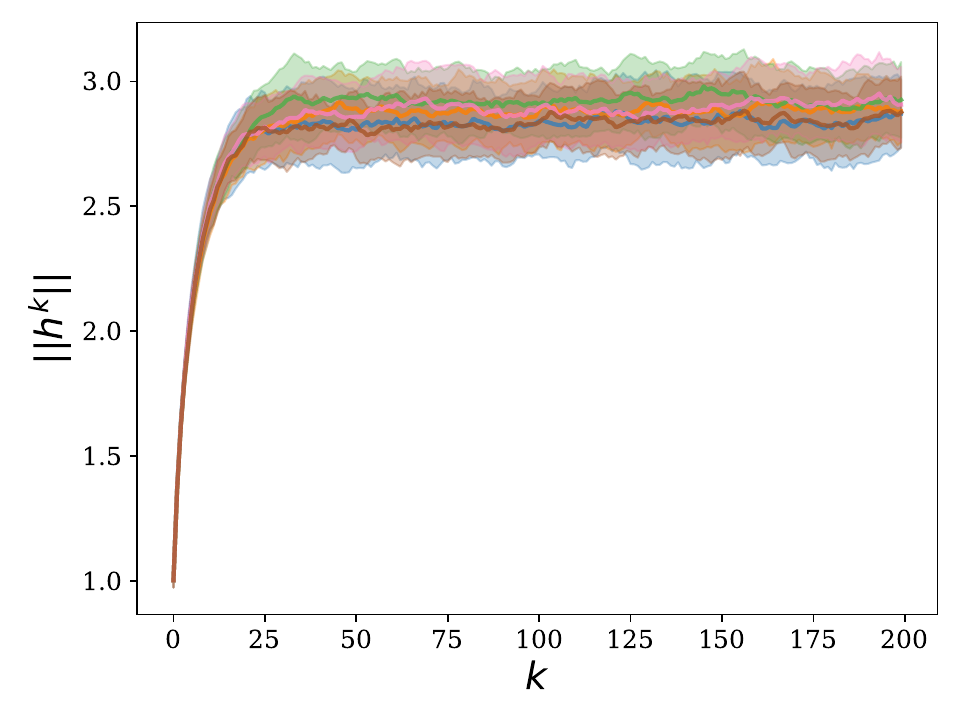}} \\
    \end{tabular}
    \caption{
    \textbf{Top vs. bottom:} Complex Glorot and rescaled initialization, respectively. 
    \textbf{Left:} Empirical density of spectral radius for 100 independent samples of $\W \in \mathbb{C}^{500\times500}$, with entries drawn i.i.d. according to complex Glorot (\cref{eq:complex_glorot}). Glorot often yields eigenvalues with magnitudes exceeding one.
    \textbf{Middle:} Norms of $\|\W^k \x\|_2$ as a function of $k$, with each curve corresponding to a different realization of $\W$. Inputs $\x$ are sampled i.i.d. from $\mathcal{N}(0,\mathbb{I}_{500})$. We report the mean and variance over $50$ random input samples. Glorot leads to exploding norms; the rescaled variant produces slowly decaying norms.
    \textbf{Right:} Norms of hidden states $\|\h_k\|_2$ in a recurrent layer with i.i.d. Gaussian inputs. The mean and variance are computed over $50$ input realizations. Glorot initialization results in unstable (exploding) hidden states, while the rescaled initialization maintains stability over time.
    }\label{fig:complex}
\end{figure}

\section{Implementation Details}\label{section:exp_details}
Our implementation is provided at the following \href{https://github.com/NogaBar/Rescaled_Glorot_for_RNN}{link}, and is based on the publicly available \href{https://github.com/NicolasZucchet/minimal-LRU}{minimal-LRU} codebase originally developed by \citet{zucchet2023minimal}. All experiments are conducted using the Adam optimizer, with weight decay applied only to non-recurrent parameters.
We employ a cosine annealing learning rate schedule, starting from a base learning rate of $10^{-3}$, with warm-up steps specified in \cref{tab:hyperparams}. All models are trained with six recurrent layers. Model dimensions and additional hyperparameters are detailed in \cref{tab:hyperparams}.

\begin{table}[h]
\centering
\caption{Hyperparameters used for each dataset.}
\label{tab:hyperparams}
\begin{tabular}{lccc}
\toprule
\textbf{Hyperparameter} & \textbf{CIFAR-10} & \textbf{IMDB} & \textbf{ListOps} \\
\midrule
Warmup End              & 18               & 7             & 5                \\
$D$     & 512              & 192           & 256              \\
$H$     & 384              & 256           & 192              \\
Batch Size              & 50               & 32            & 32               \\
Epochs                  & 180              & 65            & 50               \\
LR Factor    & 0.025            & 0.025         & 0.05             \\
Dropout Probability     & 0.05               & 0.05            & 0                \\
\bottomrule
\end{tabular}
\end{table}

\paragraph{Compute resources and training time.} 
All experiments were conducted on a single NVIDIA GeForce RTX 2080 GPU, with peak memory usage reaching up to 9 GB. Training the full dense RNN on CIFAR-10 required approximately 26 hours, while training times for other datasets were notably shorter. The diagonal variant was more computationally efficient and trained faster; for instance, training on CIFAR-10 completed in approximately 21 hours.

\section{Proofs and Additional Theory}
\label{section:proofs}

In this section, we present formal proofs of the main theoretical results discussed in the paper. We also provide additional analysis and discussion related to real Glorot matrices, as detailed in Section~\ref{sec:appendix_real_case}.

\subsection{Variance Lower Bound (Theorem \ref{theorem:lower_bound})}\label{proof:lower_bound}
We provide a proof for the lower bound on the growth of $\|\W^k \x\|$ in the finite-width and -length setting stated in Theorem \ref{theorem:lower_bound}.
\begin{theorem*}[Variance lower bound]
    Suppose $\W\in\mathbb{C}^{n\times n}$ is sampled from a complex Glorot initialization (\cref{eq:complex_glorot}), and $\x\sim\mathcal{N}(0,\mathbb{ I}_n)$ is independent of $\W$. Then the following holds for any $k,n\in\mathbb{N}$:
    \begin{equation}
    \mathbb{E}[\|\W^k\x\|^2_2] \geq \dfrac{1}{n^{k}}\dfrac{n}{k+1}\dfrac{(n+k)!}{n!}.
\end{equation}
\end{theorem*}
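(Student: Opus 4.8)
The plan is to integrate out the random input, reduce the resulting expected Frobenius norm to a sum of eigenvalue moments of $\W$, and evaluate that sum using the exact spectral law of the complex Ginibre ensemble.

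\emph{Step 1 (remove $\x$).} Since $\x\sim\mathcal{N}(0,\mathbb{I}_n)$ is real and independent of $\W$, conditioning on $\W$ and writing $\|\W^k\x\|_2^2 = \x^\top (\W^k)^*\W^k\x$ gives $\mathbb{E}_\x[\|\W^k\x\|_2^2] = \tr\big((\W^k)^*\W^k\big) = \|\W^k\|_F^2$, so the claim reduces to lower bounding $\mathbb{E}_\W[\|\W^k\|_F^2]$.

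\emph{Step 2 (Schur reduction to eigenvalues).} Using the Schur decomposition $\W = U(D+N)U^*$ with $U$ unitary, $D=\mathrm{diag}(\lambda_1,\dots,\lambda_n)$ and $N$ strictly upper triangular, unitary invariance of $\|\cdot\|_F$ yields $\|\W^k\|_F^2 = \|(D+N)^k\|_F^2$. Every term in the expansion of $(D+N)^k$ containing a factor $N$ is strictly upper triangular, so the diagonal of $(D+N)^k$ equals $(\lambda_1^k,\dots,\lambda_n^k)$; dropping the nonnegative off-diagonal contribution gives $\|\W^k\|_F^2 \ge \sum_{i=1}^n|\lambda_i|^{2k}$. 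Hence $\mathbb{E}[\|\W^k\x\|_2^2]\ge \mathbb{E}_\W\big[\sum_{i=1}^n|\lambda_i(\W)|^{2k}\big]$.

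\emph{Step 3 (evaluate the eigenvalue moment).} Complex Glorot is exactly $\W = G/\sqrt{n}$ for a standard complex Ginibre matrix $G$, whose eigenvalues $\{g_i\}$ have joint density proportional to $\prod_{i<j}|g_i-g_j|^2\,e^{-\sum_i|g_i|^2}$. By Kostlan's theorem the unordered multiset $\{|g_i|^2\}_{i=1}^n$ has the same law as $\{\gamma_j\}_{j=1}^n$ with $\gamma_j\sim\mathrm{Gamma}(j,1)$ independent, so
\[
\mathbb{E}_\W\Big[\textstyle\sum_{i=1}^n|\lambda_i|^{2k}\Big] = \frac{1}{n^k}\sum_{j=1}^n\mathbb{E}\big[\gamma_j^k\big] = \frac{1}{n^k}\sum_{j=1}^n\frac{(j+k-1)!}{(j-1)!}.
\]
(Alternatively one integrates $|z|^{2k}$ against the Ginibre one-point function $\rho_n(z)=\pi^{-1}e^{-|z|^2}\sum_{\ell=0}^{n-1}|z|^{2\ell}/\ell!$, obtaining the same sum.) A shift of index turns this into $k!\sum_{m=0}^{n-1}\binom{m+k}{k}$, and the hockey-stick identity $\sum_{m=0}^{n-1}\binom{m+k}{k}=\binom{n+k}{k+1}$ gives $k!\binom{n+k}{k+1}=\frac{n}{k+1}\frac{(n+k)!}{n!}$, which is precisely the asserted lower bound.

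\emph{Main obstacle.} Steps 1, 2 and the closing combinatorics are routine bookkeeping; the crux is Step 3, which requires an exact description of the moduli of eigenvalues of a complex Gaussian matrix, provided by Kostlan's theorem (equivalently, the explicit Ginibre correlation kernel). This is exactly why the statement is restricted to \emph{complex} Glorot --- the analogous identities for the real Ginibre ensemble are substantially more involved, so this route would be far more tedious in that case.
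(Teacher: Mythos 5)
Your proof is correct and takes essentially the same route as the paper's: reduce $\mathbb{E}[\|\W^k\x\|_2^2]$ to $\mathbb{E}[\tr((\W^k)^*\W^k)]$ via $\mathbb{E}[\x\x^\top]=\mathbb{I}$, drop the strictly upper-triangular part of the Schur form to get $\sum_i|\lambda_i|^{2k}$ as a lower bound, and evaluate that eigenvalue moment exactly for the scaled complex Ginibre ensemble. The only difference is in the last step, where you invoke Kostlan's Gamma representation of the squared moduli instead of integrating $|\lambda|^{2k}$ against the one-point density in polar coordinates — the route the paper takes and which you yourself note as an equivalent alternative; both yield the sum $\sum_{s=0}^{n-1}(k+s)!/s!$, and you additionally spell out the hockey-stick identity that the paper applies implicitly to close the computation.
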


\begin{proof}
We begin by applying a standard identity involving expectations and traces (sometimes called Hutchinson's trick):
$$
\mathbb{E}[\|\W^k \x\|_2^2] = \mathbb{E}[\x^\top (\W^k)^* \W^k \x] = \tr(\mathbb{E}[\x \x^\top] \mathbb{E}[(\W^k)^* \W^k]) = \mathbb{E}[\tr((\W^k)^* \W^k)],
$$
where we used the independence between $\x$ and $\W$, and the fact that $\mathbb{E}[\x \x^\top] = \mathbb{I}$. This reduces the problem to estimating the expected trace $\tr((\W^k)^* \W^k)$.

Next, we analyze this trace via the Schur decomposition. Any complex matrix $\W$ admits a decomposition of the form:
$$
\W = \U \T \U^{-1},
$$
where $\U$ is unitary and $\T$ is upper triangular with the eigenvalues of $\W$ on its diagonal. This can be derived from the eigenvalue decomposition via Gram-Schmidt orthogonalization of the eigenvectors.

Raising both sides to the $k$-th power gives:
$$
\W^k = \U \T^k \U^{-1},
$$
and thus:
$$
\tr((\W^k)^* \W^k) = \tr((\T^k)^* \T^k).
$$
Because $\T$ is upper triangular, the diagonal of $\T^k$ consists of $\lambda_i^k$, where $\lambda_i$ are the eigenvalues of $\W$. Consequently, the trace splits as:
$$
\tr((\W^k)^* \W^k) = \sum_{i=1}^n |\lambda_i|^{2k} + \sum_{i > j} |(\T^k)_{ij}|^2 \geq \sum_{i=1}^n |\lambda_i|^{2k}.
$$
Equality holds when $\W$ is normal (e.g., Hermitian), so the bound is tight.

To compute the expectation of this lower bound, we use the known eigenvalue density $\mu_n(\lambda)$ for non-Hermitian Gaussian matrices:
$$
\mu_n(\lambda) = \frac{1}{\sqrt{n} \pi} e^{-n |\lambda|^2} \sum_{s=0}^{n-1} \frac{n^s |\lambda|^{2s}}{s!}.
$$
This expression is due to \citet{ginibre1965statistical}; see also Proposition 2.2 in \citet{byun2022progress} for a modern treatment. Since we use Glorot initialization (i.e., standard Gaussians scaled by $1/\sqrt{n}$), we apply the change of variables $\lambda \mapsto \sqrt{n} \lambda$ to preserve the correct scaling.

Using this density, we compute the expected eigenvalue moments:
$$
\mathbb{E}[|\lambda_i|^{2k}] = \frac{1}{n\pi} \sum_{s=0}^{n-1} \frac{n^s}{s!} \int_{\mathbb{C}} |\lambda|^{2(k+s)} e^{-n |\lambda|^2} d(\sqrt{n}\lambda).
$$
We now evaluate the integral in polar coordinates:
\begin{align*}
\int_{\mathbb{C}} |\lambda|^{2(k+s)} e^{-n |\lambda|^2} d(\sqrt{n} \lambda)
&= \int_0^{2\pi} \int_0^\infty r^{2(k+s)} e^{-n r^2} \sqrt{n} r \, d(\sqrt{n} r) \, d\theta \\
&= \frac{1}{2 n^{k+s}} \int_0^{2\pi} \int_0^\infty (Rn)^{k+s} e^{-Rn} d(Rn) \, d\theta \\
&= \frac{\pi}{n^{k+s}} \Gamma(k + s + 1) = \frac{\pi (k + s)!}{n^{k + s}}.
\end{align*}

Plugging this back in:
$$
\mathbb{E}[|\lambda_i|^{2k}] = \frac{1}{n^{k+1}} \sum_{s=0}^{n-1} \frac{(k + s)!}{s!} = \frac{1}{n^k} \cdot \frac{1}{k + 1} \cdot \frac{(k + n)!}{n!}.
$$

Finally, summing over all $i$ and plugging into the lower bound, we obtain:
$$
\mathbb{E}[\tr((\W^k)^* \W^k)] \geq \frac{1}{n^k} \cdot \frac{n}{k + 1} \cdot \frac{(n + k)!}{n!},
$$
which completes the proof.
\end{proof}

\subsection{Real Glorot Initialization}\label{sec:appendix_real_case}

While our discussion of the spectral radius in Section~\ref{section:gaussian_matrices} and the rescaling strategy in Section~\ref{section:rescaling} covered both real and complex Gaussian initialization, the results on hidden state growth in Section~\ref{section:inputs} were presented only for the complex Glorot case. This restriction is due to the greater analytical tractability of the spectrum in the complex Gaussian setting.

Although complex-valued initialization is increasingly common in RNNs (e.g., \citet{orvieto2023resurrecting}), most classical results for feedforward networks focus on real Glorot initialization (\cref{eq:glorot_init}). In this section, we examine how our signal propagation results extend to the real-valued case and outline the specific analytical challenges it presents.

The main difficulty in analyzing the eigenvalue moduli for real Gaussian matrices is the lack of perfect spectral symmetry with respect to the real and imaginary axes. Unlike in the complex case, the eigenvalue distribution for real matrices is slightly biased toward the real line, and the empirical densities of real and non-real eigenvalues require separate treatment~\citep{Edelman1994HowME}. We formalize this distinction in the following two propositions, which give the density expressions for each spectral component.

\begin{proposition}[Real-eigenvalue density {\cite[Cor.~4.3]{Edelman1994HowME}}]
Let $\W\in\mathbb{R}^{n\times n}$ be sampled from real Glorot initialization.  
Then the density of a real eigenvalue $\lambda\in\mathbb R$ of $\W$ is
\[
\begin{aligned}
\mu_n^{\mathrm{real}}(\lambda)= \frac{\sqrt{n}}{C^{\mathrm{real}}_n} \frac{1}{\sqrt{2\pi}}\Big(\frac{\Gamma(n-1,n\lambda^{2})}{\Gamma(n-1)}+\frac{(\sqrt{n}\lambda)^{n-1}e^{\frac{-n\lambda^{2}}{2}}}{\Gamma(\frac{n}{2})2^{\frac{n}{2}}}\frac{\gamma(\frac{(n-1)}{2},\frac{n\lambda^{2}}{2})}{\Gamma(\frac{n-1}{2})}\Big),
\end{aligned}
\]
where $\Gamma(s,z)=\int_z^\infty t^{s-1}e^{-t}\,\mathrm{d}t$ is the upper‐incomplete gamma function, $\gamma(s,x)
   \;=\;
   \int_{0}^{x} t^{\,s-1}\,e^{-t}\mathrm{d}t$ is the lower incomplete gamma function, and we use the shorthand $\Gamma(s):=\Gamma(s,0)$, and $C^{\mathrm{real}}_n=O(\sqrt{n})$ is a normalizing constant equal to the expected number of real eigenvalues. 
\end{proposition}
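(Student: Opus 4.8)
The plan is to obtain the stated density as a direct specialization of the known formula for the real Ginibre ensemble, adjusted for the Glorot variance normalization. Write $\W = \tfrac{1}{\sqrt n}G$ with $G\in\mathbb{R}^{n\times n}$ having i.i.d.\ $\mathcal{N}(0,1)$ entries, so that $G$ is a standard real Ginibre matrix and the real eigenvalues of $\W$ are exactly $\tfrac{1}{\sqrt n}$ times the real eigenvalues of $G$. First I would invoke Edelman's exact expression \cite[Cor.~4.3]{Edelman1994HowME} for the expected density (intensity) $\rho_n^{\mathrm{Gin}}$ of real eigenvalues of $G$, together with the fact — established in the same reference — that its total mass $\int_{\mathbb R}\rho_n^{\mathrm{Gin}}(x)\,\mathrm{d}x$ equals the expected number of real eigenvalues $E_n = \sqrt{2n/\pi}\,(1+o(1)) = O(\sqrt n)$. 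Setting $C^{\mathrm{real}}_n := E_n$, the quantity $\rho_n^{\mathrm{Gin}}/C^{\mathrm{real}}_n$ is then the probability density of a uniformly chosen real eigenvalue of $G$.

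Next I would transport this density through the scaling $x = \sqrt n\,\lambda$. The change of variables contributes a Jacobian factor $\sqrt n$, giving $\mu_n^{\mathrm{real}}(\lambda) = \tfrac{\sqrt n}{C^{\mathrm{real}}_n}\,\rho_n^{\mathrm{Gin}}(\sqrt n\,\lambda)$. Substituting the explicit two-term form of $\rho_n^{\mathrm{Gin}}$ and replacing its argument squared by $n\lambda^2$ everywhere yields the claimed upper-incomplete-gamma term $\Gamma(n-1,n\lambda^2)/\Gamma(n-1)$ and the mixed Gaussian / lower-incomplete-gamma term; the rest is bookkeeping of the constants $\Gamma(\tfrac n2)\,2^{n/2}$ and $\tfrac{1}{\sqrt{2\pi}}$, and a check that the formula integrates to one (which is automatic from the normalization by $C^{\mathrm{real}}_n$).

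The main obstacle is purely one of conventions: Edelman states his result for a particular fixed variance and typically phrases the density using $\erfc$, truncated exponential series, or Hermite-type kernels rather than incomplete gamma functions, so one must carefully apply identities such as $\Gamma(m,z)/\Gamma(m) = e^{-z}\sum_{j=0}^{m-1} z^j/j!$ for integer $m$ to rewrite his formula in the stated form, and make sure that no factor of $2$ or $\sqrt n$ is lost when passing from $G$ to $\W = \tfrac{1}{\sqrt n}G$. The asymptotic $C^{\mathrm{real}}_n = O(\sqrt n)$ I would simply quote from \cite{Edelman1994HowME} rather than re-derive, since it follows from integrating $\rho_n^{\mathrm{Gin}}$ and is already recorded there.
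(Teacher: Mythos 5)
Your proposal is correct and matches what the paper does: the proposition is quoted directly from Edelman's Corollary~4.3 for the standard real Ginibre ensemble, with the Glorot variance handled exactly by the change of variables $x=\sqrt{n}\,\lambda$ (Jacobian $\sqrt n$) and normalization by the expected number of real eigenvalues $C^{\mathrm{real}}_n=O(\sqrt n)$. Your attention to the convention-translation (rewriting truncated exponential sums as incomplete gamma ratios via $\Gamma(m,z)/\Gamma(m)=e^{-z}\sum_{j=0}^{m-1}z^j/j!$) is precisely the bookkeeping required, and nothing further is needed.
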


\begin{proposition}[Complex‐eigenvalue density {\cite[Thm.~6.2]{10.1006/jmva.1996.1653}}]
Let $\W\in\mathbb{R}^{n\times n}$ be sampled from real Glorot initialization. The density of a complex eigenvalue $\lambda=x+iy$ of $\W$ in the upper half‐plane ($y>0$) is
\[
\begin{aligned}
\mu^{\mathrm{complex}}_n(x,y)
&= 
\dfrac{2n}{C^{\mathrm{complex}}_n}\,\sqrt{\frac{2}{\pi}}y\,
\exp\!\bigl[n\,(y^2 - x^2)\bigr]\,
\erfc\!\Bigl(\sqrt{2n}\,y\Bigr)\,
\sum_{k=0}^{n-2}\frac{\bigl(n(x^2+y^2)\bigr)^{k}}{k!},
\quad y>0,
\end{aligned}
\]
where $\erfc(x):=
   \frac{2}{\sqrt{\pi}}
   \int_{x}^{\infty} e^{-t^{2}}\,\mathrm dt$ is the complementary error function, and $C^{\mathrm{complex}}_n=O(n)$ is a normalizing constant equal to the expected number of complex eigenvalues. 
\end{proposition}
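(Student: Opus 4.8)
The proposition is the classical one‑point correlation function of the real Ginibre ensemble restricted to non‑real eigenvalues, due to Lehmann–Sommers and Edelman, so the route I would take mirrors the standard derivation (and is precisely why the paper can simply invoke \cite[Thm.~6.2]{10.1006/jmva.1996.1653}). The plan has four stages. First, I would obtain the joint eigenvalue density of $\W$ in each \emph{sector} $\{\W$ has exactly $k$ real eigenvalues$\}$ (with $n-k$ even). The tool is the real Schur decomposition $\W=\mathbf{Q}(\Lambda+\T)\mathbf{Q}^{\top}$, where $\mathbf{Q}$ is orthogonal, $\Lambda$ is block‑diagonal with $k$ scalar blocks (the real eigenvalues) and $m=(n-k)/2$ genuine $2\times2$ blocks (one per complex‑conjugate pair $x_j\pm iy_j$), and $\T$ is strictly block‑upper‑triangular. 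Computing the Jacobian of this change of variables and integrating out $\mathbf{Q}$ over the appropriate flag manifold and $\T$ against the Gaussian weight $\propto e^{-\tfrac n2 \|\W\|_F^2}$ (the Glorot scaling with $\mathcal N(0,1/n)$ entries) yields a density proportional to the absolute Vandermonde over all $n$ eigenvalues times independent per‑eigenvalue weights:
\[
p_{n,k}\;\propto\;\bigl|\Delta(\lambda_1,\dots,\lambda_k,z_1,\bar z_1,\dots,z_m,\bar z_m)\bigr|\prod_{i=1}^{k}e^{-n\lambda_i^2/2}\prod_{j=1}^{m} y_j\,e^{n(y_j^2-x_j^2)}\,\erfc\!\bigl(\sqrt{2n}\,y_j\bigr).
\]
The $\erfc$ factor is exactly what survives after integrating out the off‑diagonal entry coupling the two components of a $2\times2$ block; this is the only non‑elementary Gaussian integral in the step.

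Second, I would recognize this density — absolute Vandermonde times independent weights, with eigenvalues of two types — as a \emph{Pfaffian point process}. Introducing the appropriate antisymmetric bilinear form (with separate real–real and complex–complex contributions) and applying de Bruijn's integration identities converts both the sector partition functions and all correlation functions into Pfaffians of a $2\times2$ matrix kernel built from \emph{skew‑orthogonal polynomials} for the Gaussian weight. For the Gaussian weight these polynomials are explicit: the even ones are monomials $z^{2j}$ and the odd ones are $z^{2j+1}$ up to a lower‑order correction, with known normalizing constants $r_j$. Summing over sectors (i.e.\ over $k$) is then automatic in the Pfaffian formalism (Borodin–Sinclair, Forrester–Nagao), so one never needs the sector probabilities themselves.

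Third, the complex‑eigenvalue one‑point function is a single entry of that kernel multiplied by the complex weight $y\,e^{n(y^2-x^2)}\,\erfc(\sqrt{2n}\,y)$. The finite skew‑orthogonal series collapses to the truncated exponential $\sum_{k=0}^{n-2}\bigl(n(x^2+y^2)\bigr)^k/k!$, and collecting the constants produces the stated prefactor $2n\sqrt{2/\pi}$, with the $n$‑dependence being fixed once and for all by the change of variables $\lambda\mapsto\sqrt n\,\lambda$ already used in the proof of Theorem~\ref{theorem:lower_bound}. Finally I would divide by $C^{\mathrm{complex}}_n$, the expected number of non‑real eigenvalues, which equals $n-\mathbb{E}[\#\text{real}] = n - O(\sqrt n) = O(n)$ (by the companion count of \cite{Edelman1994HowME}); this is the normalization making $\mu^{\mathrm{complex}}_n$ integrate to one over the half‑plane $y>0$.

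I expect the main obstacle to be the first and third stages rather than any conceptual one: the Schur‑form Jacobian with mixed $1\times1$ and $2\times2$ blocks is delicate, and pinning down the exact constants — the upper limit $n-2$ in the sum, the factor $2n\sqrt{2/\pi}$, and the normalizer $C^{\mathrm{complex}}_n$ — requires careful bookkeeping through the Pfaffian reduction and the skew‑orthogonal normalizations $r_j$. Since all of this is carried out in the cited works, within this paper it suffices to quote the density from \cite[Thm.~6.2]{10.1006/jmva.1996.1653} and to note that the $1/n$ rescaling matches the Glorot convention, exactly as done for the complex‑eigenvalue density of the complex Ginibre ensemble in the proof of Theorem~\ref{theorem:lower_bound}.
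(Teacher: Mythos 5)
The paper gives no proof of this proposition at all—it is quoted verbatim (up to the $\lambda\mapsto\sqrt{n}\,\lambda$ Glorot rescaling) from Edelman's Theorem~6.2—so your concluding remark that citation suffices matches exactly what the paper does. Your sketch of the underlying derivation (real Schur form with $2\times2$ blocks producing the $y\,e^{n(y^2-x^2)}\erfc(\sqrt{2n}\,y)$ weight, the Pfaffian/skew-orthogonal-polynomial machinery, and the truncated exponential in the one-point function) is the standard and correct route taken in the cited literature.
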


\begin{figure}[h]
    \centering
    \includegraphics[width=0.6\linewidth]{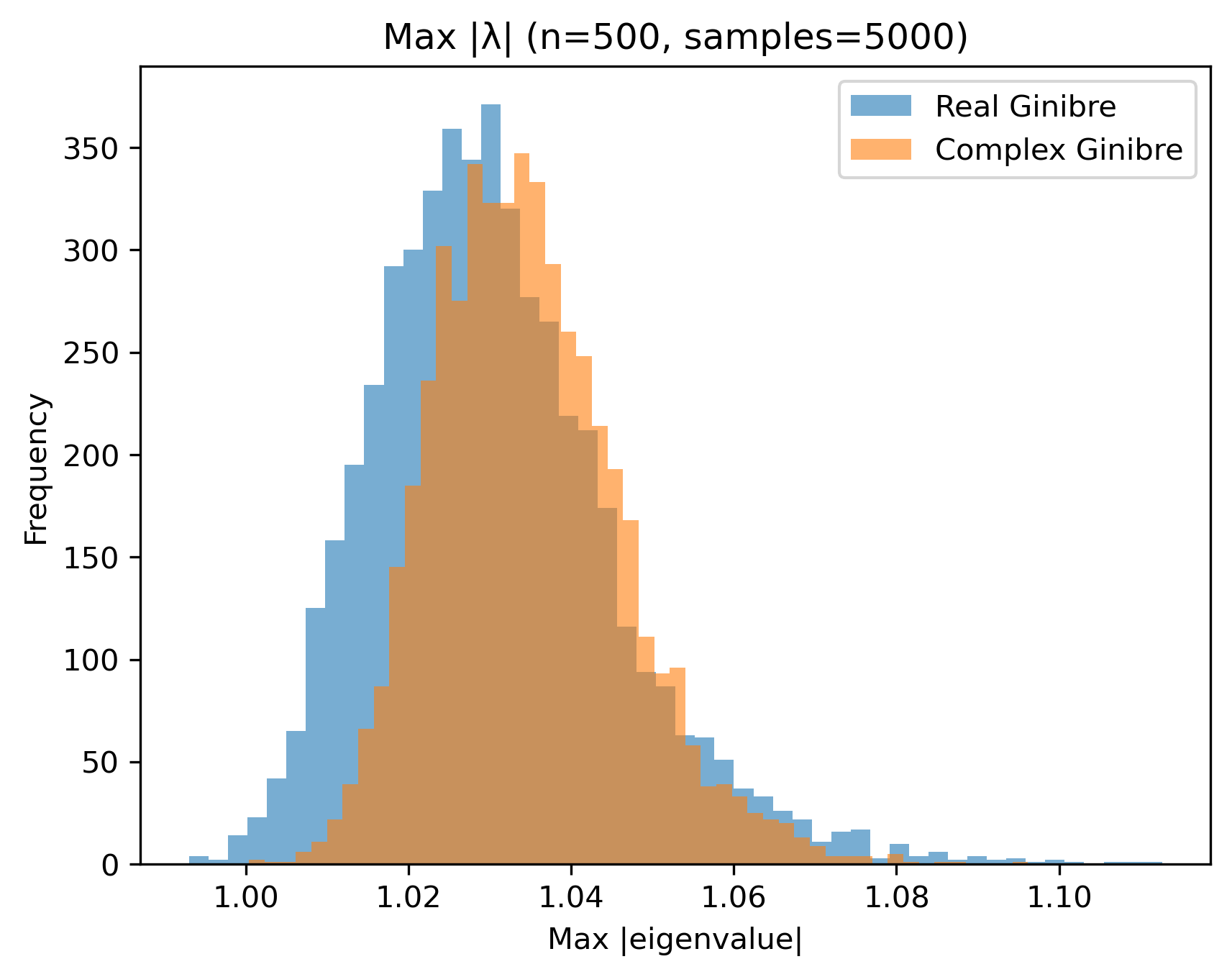}
    \caption{We show histograms of the maximal eigenvalue sizes for both the real and the complex Glorot ensembles (i.e Glorot initializations) . One can see that the behavior is similar overall. However, there are two notable differences: the mean of the real Glorot matrices is slightly smaller than in the complex case, but nevertheless the right tail of the distribution is longer. While the typical size of the largest eigenvalue is similar in both cases, this latter feature results in the expected $k$-th absolute moment being larger in the real case for large $k$.}
    \label{fig:real_vs_complex_edge}
\end{figure}
Note that the above proposition considers only eigenvalues in the upper half-plane, as the complex eigenvalues of real Gaussian matrices occur in conjugate pairs. Consequently, the spectral distribution in the lower half-plane mirrors that of the upper half.

As a result, computing the expectation of the $2k$-th moment of the spectrum—analogous to the approach used in the proof of Theorem~\ref{section:finite_hidden_states} for the complex Gaussian case—reduces to the following expression in the real case:
\begin{proposition}[Expected $k$-th absolute moment]
For any integer $k\ge0$, the expectation of the $2k$-th absolute moment of the spectrum of $\W$ can be computed as follows:

\[\mathbb{E}\Bigl[\sum_{i=1}^n|\lambda_i|^{2k}\Bigr]
= \dfrac{C^{\mathrm{real}}_n}{n}\,\underbrace{\int|x|^{2k}\,\mu_n^{\mathrm{real}}(x)\,\mathrm{d}x}
_{\displaystyle\text{real eigenvalues}}
\;+\;
\dfrac{C^{\mathrm{complex}}_n}{n}\underbrace{\!\iint_{y>0}(x^2+y^2)^{k}\,\mu^{\mathrm{complex}}_n(x,y)\,\mathrm{d}x\,\mathrm{d}y}
_{\displaystyle\text{complex conjugate pairs}}.
\]
\end{proposition}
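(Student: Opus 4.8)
The plan is to obtain the identity as a direct consequence of the law of total expectation for the eigenvalue point process, the only genuine subtlety being the conjugate-pair structure of the spectrum of a real matrix. First I would record that for $\W\in\mathbb{R}^{n\times n}$ the eigenvalue multiset decomposes into the real eigenvalues together with complex-conjugate pairs $\{x+iy,\,x-iy\}$ with $y>0$; since $|\lambda|^{2k}$ is invariant under complex conjugation, each such pair contributes exactly $2(x^{2}+y^{2})^{k}$ to $\sum_{i=1}^{n}|\lambda_i|^{2k}$, so this sum splits cleanly into a real-eigenvalue part and a complex-conjugate-pair part.

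Next I would take expectations of the two parts separately, using the one-point correlation functions supplied by the two preceding propositions. The densities $\mu_n^{\mathrm{real}}$ and $\mu^{\mathrm{complex}}_n$ are probability-normalized, and by definition $C_n^{\mathrm{real}}$ and $C_n^{\mathrm{complex}}$ are the expected numbers of real and of complex eigenvalues, so $C_n^{\mathrm{complex}}/2$ is the expected number of eigenvalues in the open upper half-plane; multiplying a probability density by its expected count recovers the corresponding intensity. By linearity of expectation over the point process (Campbell's formula), $\mathbb{E}\bigl[\sum_{\lambda\in\mathbb{R}}|\lambda|^{2k}\bigr]=C_n^{\mathrm{real}}\int|x|^{2k}\mu_n^{\mathrm{real}}(x)\,\mathrm{d}x$, while the conjugate-pair part equals $2\cdot\tfrac{C_n^{\mathrm{complex}}}{2}\iint_{y>0}(x^{2}+y^{2})^{k}\mu^{\mathrm{complex}}_n(x,y)\,\mathrm{d}x\,\mathrm{d}y$, where the factor $2$ from each pair cancels the factor $\tfrac12$ coming from the restriction to $\{y>0\}$. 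Equivalently, one may argue through a single uniformly labelled eigenvalue and condition on whether it is real (probability $C_n^{\mathrm{real}}/n$) or non-real (probability $C_n^{\mathrm{complex}}/n$), exactly as in the complex-Ginibre computation behind \cref{theorem:lower_bound}; this is what produces the $C_n^{\mathrm{real}}/n$ and $C_n^{\mathrm{complex}}/n$ weights in the stated formula. Adding the two contributions yields the claimed decomposition.

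To make this fully rigorous I would additionally (i) invoke the standard fact that the eigenvalue configuration of the real Ginibre ensemble is a point process whose real and complex parts carry exactly these correlation functions; (ii) verify integrability of $x\mapsto x^{2k}$ against $\mu_n^{\mathrm{real}}$ and of $(x,y)\mapsto(x^{2}+y^{2})^{k}$ against $\mu^{\mathrm{complex}}_n$, which is immediate from the incomplete-Gamma and $\erfc$-type Gaussian decay of those densities and legitimizes the termwise expectation; and (iii) double-check the bookkeeping of the factor $2$ against the $\{y>0\}$ restriction, which is the one place an error is easy to slip in. I do not anticipate a real obstacle here: unlike \cref{theorem:lower_bound}, whose content was the resulting closed form, this proposition is a pure bookkeeping identity that merely sets up the two integrals. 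The genuinely laborious step --- which the surrounding discussion, not this statement, is responsible for --- is evaluating and asymptotically estimating those two integrals in the double-scaling regime, as was carried out in closed form for the complex case in \cref{section:infinite_length}; there the incomplete-Gamma and $\erfc$ factors are precisely what makes the analysis ``significantly more tedious.''
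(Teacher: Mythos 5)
The paper states this proposition without proof---it is offered as immediate bookkeeping, ``analogous to the approach used in the proof of'' the complex case---so there is no argument of the paper's to compare against step by step. Your route via the point-process decomposition into real eigenvalues and conjugate pairs, plus Campbell's formula, is exactly the intended one, and your handling of the factor $2$ from each conjugate pair against the restriction to $\{y>0\}$ is correct.

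However, your writeup contains a genuine factor-of-$n$ inconsistency that you assert away rather than resolve. With the normalizations stated in the two preceding propositions ($\mu_n^{\mathrm{real}}$ and $\mu_n^{\mathrm{complex}}$ are probability densities, and $C_n^{\mathrm{real}}$, $C_n^{\mathrm{complex}}$ are expected counts), your Campbell computation correctly yields
\[
\mathbb{E}\Bigl[\sum_{i=1}^n|\lambda_i|^{2k}\Bigr] = C_n^{\mathrm{real}}\int|x|^{2k}\,\mu_n^{\mathrm{real}}(x)\,\mathrm{d}x \;+\; C_n^{\mathrm{complex}}\iint_{y>0}(x^2+y^2)^{k}\,\mu_n^{\mathrm{complex}}(x,y)\,\mathrm{d}x\,\mathrm{d}y,
\]
with no $1/n$ factors. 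The alternative you call ``equivalent''---conditioning a single uniformly labelled eigenvalue on whether it is real---computes $\mathbb{E}[|\lambda_I|^{2k}]=\frac{1}{n}\mathbb{E}[\sum_i|\lambda_i|^{2k}]$, and it is \emph{that} quantity which carries the weights $C_n^{\mathrm{real}}/n$ and $C_n^{\mathrm{complex}}/n$ appearing in the stated formula. The two routes therefore disagree by a factor of $n$, so they cannot both yield ``the claimed decomposition''; as written, your proof establishes two mutually incompatible identities and declares both to be the target. The correct resolution is that the left-hand side of the stated formula should be $\frac{1}{n}\mathbb{E}[\sum_i|\lambda_i|^{2k}]$ (equivalently, the per-eigenvalue moment, exactly as in the complex-case computation inside the proof of \cref{theorem:lower_bound}, which is only afterwards multiplied by $n$), or else the $1/n$ factors should be dropped from the right-hand side. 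A careful proof must detect this normalization mismatch and state which version it is proving, rather than present the two computations as interchangeable.
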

Unlike the complex Glorot case, there is no known closed-form expression for the corresponding integral in the real setting. Nevertheless, numerical results indicate that the behavior is remarkably similar across both cases. As shown in \cref{fig:real_vs_complex}, the expected hidden state growth under real and complex initialization closely aligns. While the mean spectral radius for real matrices is slightly lower than that of complex matrices, the real case exhibits a noticeably heavier tail in its distribution (shown in \cref{fig:real_vs_complex_edge}).

This suggests that although real Glorot initialization is, on average, less prone to instability (as discussed in Section~\ref{section:gaussian_matrices}), it can lead to more rapid exponential growth when the spectral radius does exceed one.

\begin{figure}[h]
    \centering
    \includegraphics[width=0.6\linewidth]{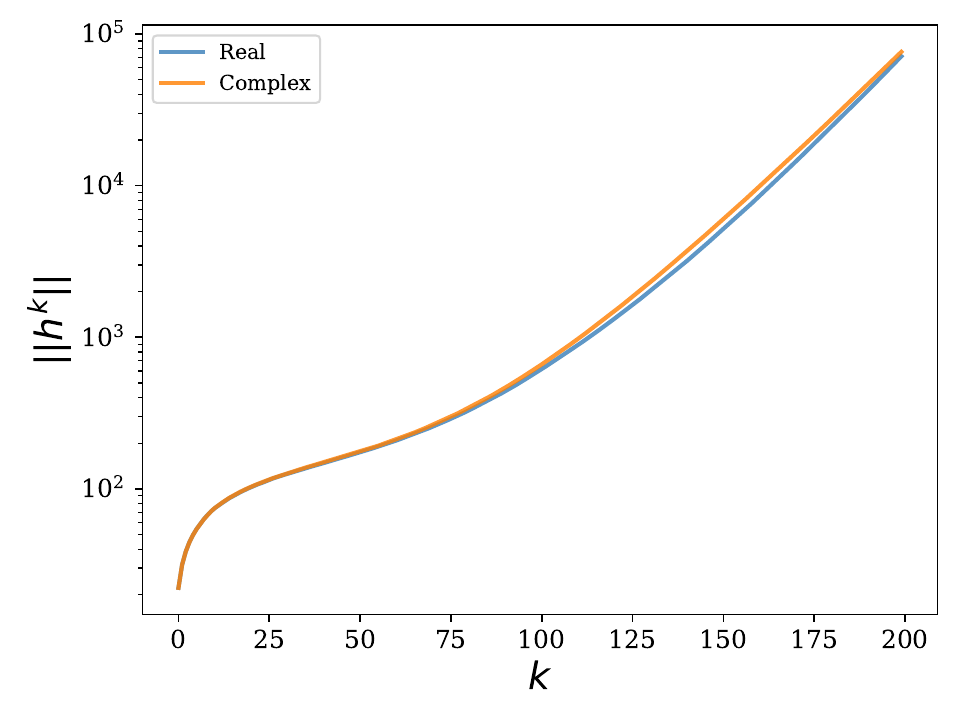}
    \caption{Norms of hidden states $\|\h_k\|_2$ in a recurrent layer with i.i.d. Gaussian inputs. The mean is computed over 100 realizations of $\W$ and 5 independent inputs. Both real and complex Glorot initializations lead to similar unstable (exploding) hidden states.}
    \label{fig:real_vs_complex}
\end{figure}

\subsection{Infinite-Width-and-Length Limit}\label{sec:proofs_double_scaling}
In this section, we formally justify the asymptotic expression for the lower bound of $ \|\W^k \x\| $ in the infinite-width-and-length regime. This result was introduced informally in Section~\ref{section:infinite_length}. We now show that the bound indeed holds in the double-scaling limit.

\begin{lemma}
Suppose $ n, k \to \infty $ with $ k = \alpha \sqrt{n} $ for some fixed $ \alpha \in \mathbb{R} $. Then we have:
\begin{align}
    \frac{1}{n^k} \cdot \frac{(k+n)!}{n!} \to e^{\alpha^2/2}.
\end{align}
\end{lemma}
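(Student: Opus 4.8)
The plan is to take logarithms and reduce the claim to an elementary summation estimate. Writing the ratio as a telescoping product,
\begin{equation}
\frac{1}{n^k}\cdot\frac{(k+n)!}{n!} = \prod_{d=1}^{k}\frac{n+d}{n} = \prod_{d=1}^{k}\Bigl(1+\frac{d}{n}\Bigr),
\end{equation}
so by continuity of $\exp$ it suffices to prove that $S_{n,k} := \sum_{d=1}^{k}\log\bigl(1+\tfrac{d}{n}\bigr)$ converges to $\alpha^2/2$ as $n\to\infty$ with $k/\sqrt n\to\alpha$.

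To estimate $S_{n,k}$ I would use the exact two-sided bound $x - \tfrac{x^2}{2} \le \log(1+x) \le x$, valid for all $x\ge 0$. The right inequality is standard; the left one follows because $g(x) = \log(1+x) - x + \tfrac{x^2}{2}$ satisfies $g(0)=0$ and $g'(x) = \tfrac{x^2}{1+x}\ge 0$. Applying this with $x = d/n$ and summing over $d=1,\dots,k$ gives
\begin{equation}
\frac{k(k+1)}{2n} - \frac{1}{2n^2}\sum_{d=1}^{k}d^2 \;\le\; S_{n,k} \;\le\; \frac{k(k+1)}{2n}.
\end{equation}

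Now I would plug in $k = \alpha\sqrt n$ (only $k/\sqrt n\to\alpha$ is actually used). The upper bound equals $\tfrac{k^2}{2n}+\tfrac{k}{2n}\to\tfrac{\alpha^2}{2}+0$. For the lower bound, the correction term is $\tfrac{1}{2n^2}\sum_{d=1}^k d^2 = \tfrac{k(k+1)(2k+1)}{12n^2} = O(k^3/n^2) = O(n^{-1/2})\to 0$, so the lower bound also tends to $\alpha^2/2$. The squeeze theorem then yields $S_{n,k}\to\alpha^2/2$, and exponentiating completes the proof.

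There is no genuine obstacle here; the argument is a short calculus estimate. The only points deserving a little care are (i) using the Taylor-type inequality in its exact two-sided form rather than an $O(x^3)$ remainder whose uniformity would otherwise need a word of justification — though even that is harmless since $d/n\le \alpha/\sqrt n\to 0$ — and (ii) handling the fact that $\alpha\sqrt n$ need not be an integer, for which it suffices that $k^2/n\to\alpha^2$ and $k^3/n^2\to 0$, both of which hold for $k=\lfloor\alpha\sqrt n\rfloor$.
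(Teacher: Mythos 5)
Your proposal is correct and follows essentially the same route as the paper's proof: rewrite the ratio as the telescoping product $\prod_{d=1}^{k}(1+d/n)$, take logarithms, and show the sum converges to $\alpha^2/2$ because the first-order term gives $k(k+1)/(2n)\to\alpha^2/2$ while the second-order correction is $O(k^3/n^2)=O(n^{-1/2})$. Your use of the exact two-sided bound $x-\tfrac{x^2}{2}\le\log(1+x)\le x$ and the remark about non-integer $\alpha\sqrt{n}$ are minor refinements in rigor over the paper's Taylor-remainder bookkeeping, not a different argument.
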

This type of asymptotic expressions is well-known in mathematical analysis and arises frequently in the study of factorial approximations and exponential limits.
\begin{proof}
Let \( k = \alpha \sqrt{n} \) for some fixed \( \alpha > 0 \). We begin by rewriting the expression:
$$
\frac{1}{n^k} \cdot \frac{(n+k)!}{n!} = \prod_{d=1}^{k} \left(1 + \frac{d}{n} \right).
$$
Taking logarithms and using the Taylor expansion \( \log(1 + x) = x - \frac{x^2}{2} + \cdots \), we write:
$$
\sum_{d=1}^{k} \log\left(1 + \frac{d}{n} \right) = \sum_{d=1}^{k} \left( \frac{d}{n} + O\left(\frac{d^2}{n^2} \right) \right).
$$
Since \( d \leq \alpha \sqrt{n} \), we have \( d^2/n^2 \leq \alpha^2/n \), so:
$$
\sum_{d=1}^{k} \log\left(1 + \frac{d}{n} \right) = \frac{1}{n} \sum_{d=1}^{k} d + O\left( \frac{k}{n} \right).
$$
Using \( \sum_{d=1}^{k} d = \frac{k(k+1)}{2} \), we obtain:
$$
\sum_{d=1}^{k} \log\left(1 + \frac{d}{n} \right) = \frac{\alpha^2 n + \alpha \sqrt{n}}{2n} + O\left( \frac{1}{\sqrt{n}} \right) = \frac{\alpha^2}{2} + O\left( \frac{1}{\sqrt{n}} \right).
$$
Exponentiating both sides yields:
$$
\frac{1}{n^k} \cdot \frac{(n+k)!}{n!} = \exp\left( \frac{\alpha^2}{2} + O\left( \frac{1}{\sqrt{n}} \right) \right) = \exp\left( \frac{\alpha^2}{2} \right) \left( 1 + O\left( \frac{1}{\sqrt{n}} \right) \right),
$$
which completes the proof.
\end{proof}

\end{document}